\documentclass{article}
\usepackage{iclr2027_conference,times}
\usepackage[T1]{fontenc}
\usepackage[utf8]{inputenc}
\usepackage{amsmath,amssymb,mathtools}
\usepackage{amsthm}
\newtheorem{theorem}{Theorem}
\newtheorem{proposition}{Proposition}
\usepackage{graphicx,booktabs,longtable,array,calc}
\usepackage{caption}
\usepackage[section]{placeins}
\usepackage{float}
\usepackage{fvextra}
\usepackage{microtype}
\usepackage{xurl}
\usepackage[unicode,hidelinks]{hyperref}
\hypersetup{pdftitle={Cortex: A Fixed-Point Theory of Governed Coding Agents},pdfauthor={Marius-Constantin Dinu and Florian Zeba},
  pdfsubject={Conditional semantics and evaluation methodology for governed coding agents}}
\DeclareUnicodeCharacter{03C4}{\ensuremath{\tau}}
\DeclareUnicodeCharacter{2032}{\ensuremath{\prime}}
\DeclareUnicodeCharacter{2227}{\ensuremath{\land}}
\DeclareUnicodeCharacter{2192}{\ensuremath{\rightarrow}}
\providecommand{\tightlist}{\setlength{\itemsep}{0pt}\setlength{\parskip}{0pt}}
\RecustomVerbatimEnvironment{verbatim}{Verbatim}{breaklines=true,fontsize=\footnotesize}
\newsavebox{\pandocbox}
\newcommand{\pandocbounded}[1]{%
  \sbox{\pandocbox}{#1}%
  \ifdim\wd\pandocbox>\linewidth
    \resizebox{\linewidth}{!}{\usebox{\pandocbox}}%
  \else\usebox{\pandocbox}\fi}
\title{Cortex: A Fixed-Point Theory of Governed Coding Agents\thanks{Canonical paper and latest results:\newline
\url{https://benchmark.cortex.a2olabs.com}\newline
\url{https://cortex.a2olabs.com}\newline
Benchmark source: \url{https://github.com/Xpitfire/cortex-gauntlet}.}}
\author{Marius-Constantin Dinu \& Florian Zeba\\ Alpha Omega Labs}
\iclrfinalcopy
\begin{document}
\maketitle
% The official final mode reveals authors; replace its conference-acceptance header.
\lhead{Preprint. September 15, 2026}
\begin{abstract}

A \emph{coding agent} combines a large language model (LLM) with a harness that plans, edits, and executes code. We study \textbf{Cortex}, a supervisory layer for pre-execution governance, requirement tracking, and iterative validation and repair. We formalize requirement closure on a finite lattice. For a fixed requirement set and a sound, monotone validation rule with persistent evidence, exhaustive iteration reaches the least fixed point with no more strict increases than there are requirements. Fair consequence scheduling reaches the same closure. A separate stochastic model bounds expected completion time under a uniform positive-progress assumption. These results depend on their stated assumptions; artifact repair, fallible validation, and finite execution budgets do not satisfy them automatically.

We define Gauntlet's trajectory-similarity and build-gated scores, attack-success estimators, Wilson intervals, and case-cluster bootstrap summaries. Effective-feedback accounting counts newly certified requirements, rather than estimating their information value or utility. The five evaluation families support raw-versus-governed comparisons. We report retained benchmark observations, including rescoring of existing artifacts; constructed outcomes are excluded. Unequal observation sets and incomplete provenance prevent causal claims about governance or measured long-horizon scaling.

\end{abstract}

\section{1. Introduction}\label{introduction}

An LLM coding harness turns model output into tool calls, file changes, and shell commands. Task competence and execution control are distinct properties: a capable agent can follow malicious instructions \citep{ref-46,ref-20}, while a restricted agent can still leave requirements unresolved on a long task \citep{ref-21}.

\textbf{Cortex} configures governance and orchestration around a base harness. Governance applies pre-execution checks and capability/dependency policies. Orchestration analyzes instructions, tracks requirements, and coordinates validation and repair within a resource budget. Action filtering alone does not establish planning competence or complete request-level safety. Validation may use executable checks, fixed rules, or model-based judgments; their reliability is a separate evaluation question. We study these contracts through the \textbf{Gauntlet} benchmark framework.

Inference-time sampling, search, and revision trade additional computation for task performance \citep{ref-26,ref-27}, complementing training-time scaling laws \citep{ref-24,ref-25}. The effective-feedback perspective \citep{ref-21} motivates tracking valid, nonredundant feedback that remains useful after later edits, rather than counting only tokens or tool calls. Our requirement count is an operational measure of retained progress. Classical value of information \citep{ref-22} and value of computation \citep{ref-23} also require beliefs, utilities, and costs; a ledger update does not determine either value.

We address \emph{(i) what an idealized supervisory layer guarantees under stated assumptions} and \emph{(ii) how its implemented behavior can be evaluated}. The least fixed point in (i) is the closure of a specified validation rule, which may leave required work unresolved. The metrics and comparison protocol in (ii) do not establish that a particular implementation satisfies the theorem's hypotheses or that historical arm differences isolate governance alone.

\subsection{Contributions}\label{contributions}

\begin{itemize}
\tightlist
\item
  A typed action-filter model and a conditional finite-lattice model of requirement closure (Sections 3--4).
\item
  A proof of finite closure, oracle-relative soundness, and fair-schedule independence under explicit assumptions (Theorem 1), plus accounting and conditional hitting-time bounds (Propositions 1--2).
\item
  Definitions and domain restrictions for the implemented evaluation metrics (Section 5).
\item
  A five-family evaluation protocol, with implementation and historical-evidence limitations made explicit rather than treated as experimental conclusions (Sections 6--9).
\end{itemize}

\section{2. Related Work and Theoretical Lineage}\label{related-work-and-theoretical-lineage}

\textbf{Neurosymbolic and cognitive computation.} Combining neural generation with symbolic constraints is one form of neurosymbolic integration \citep{ref-6,ref-7}. Validation--repair cycles also resemble the repeated rule application of classical cognitive architectures: the problem-space account of Newell and Simon \citep{ref-8}, SOAR \citep{ref-9}, and ACT-R \citep{ref-10}. Section 4 isolates a finite consequence operator, rather than modeling a complete cognitive architecture.

\textbf{Fixed-point semantics.} Our convergence argument rests on the least-fixed-point semantics of monotone operators: van Emden \& Kowalski's immediate-consequence operator and its least fixed point as the meaning of a logic program \citep{ref-2}, the Knaster--Tarski lattice fixed-point theorem \citep{ref-3}, and Kleene iteration \citep{ref-4}. Where an operator is a contraction on a metric space, Banach's theorem \citep{ref-5} gives uniqueness and a rate; we use the lattice formulation because the requirement state space is naturally a finite complete lattice.

\textbf{Evaluation.} Capability over trajectories is scored by VERTEX, defined in Section 5.1: embedding cross-similarity against a reference descriptor set, with a presence component (BERTScore-style bidirectional matching \citep{ref-11}) and an order component (distance-decayed dynamic time warping \citep{ref-12}), each affinely recalibrated against the matrix's mean-similarity baseline. Embedding comparison also appears in trajectory evaluation \citep{ref-1,ref-19}. Gauntlet applies this construction to extracted descriptors, whose order need not represent an observed execution trajectory. For integers \(1\le k\le n\), the any-seed attack-success estimate uses \(\mathrm{pass}@k\) \citep{ref-15}. Under independent trials with common success probability \(p\in[0,1]\), it is unbiased for \(1-(1-p)^k\). Success rates carry Wilson intervals \citep{ref-13} and bootstrap summaries \citep{ref-14}. The optional rubric judge draws on G-Eval \citep{ref-16}, MT-Bench \citep{ref-17}, and Prometheus \citep{ref-18}. Their bias analyses motivate judge validation but do not establish the validity of Gauntlet's judges. Safety draws on the jailbreak \citep{ref-46} and real-world prompt-injection \citep{ref-20} literatures.

\textbf{Scaling, test-time compute, and effective feedback.} Training scaling laws relate compute, data, and loss \citep{ref-24,ref-25}; inference-time search and revision trade compute for accuracy \citep{ref-26,ref-27}. Effective-feedback work \citep{ref-21} motivates measuring usable feedback. We count newly certified requirements, as formalized in Section 4.3. This finite-set accounting identity neither derives an empirical scaling law nor establishes that equal counts represent equal information or utility.

\textbf{Iterative feedback, verification, and metareasoning.} ReAct interleaves reasoning and interaction \citep{ref-31}; Self-Refine and Reflexion study feedback and revision \citep{ref-29,ref-30}. Process supervision is a distinct approach to evaluating intermediate reasoning steps \citep{ref-28}. Repository issue resolution is studied in benchmarks such as SWE-bench \citep{ref-32}; it is not the common evaluation setting of all these methods. Our theorem concerns any operator satisfying its assumptions. An implementation requires a separate conformance argument. Value of information \citep{ref-22} and rational metareasoning \citep{ref-23} provide decision-theoretic context, without supplying a utility model or an optimal computation-selection policy for the present framework.

\begin{figure}
\centering
\includegraphics[width=1\linewidth,height=\textheight,keepaspectratio]{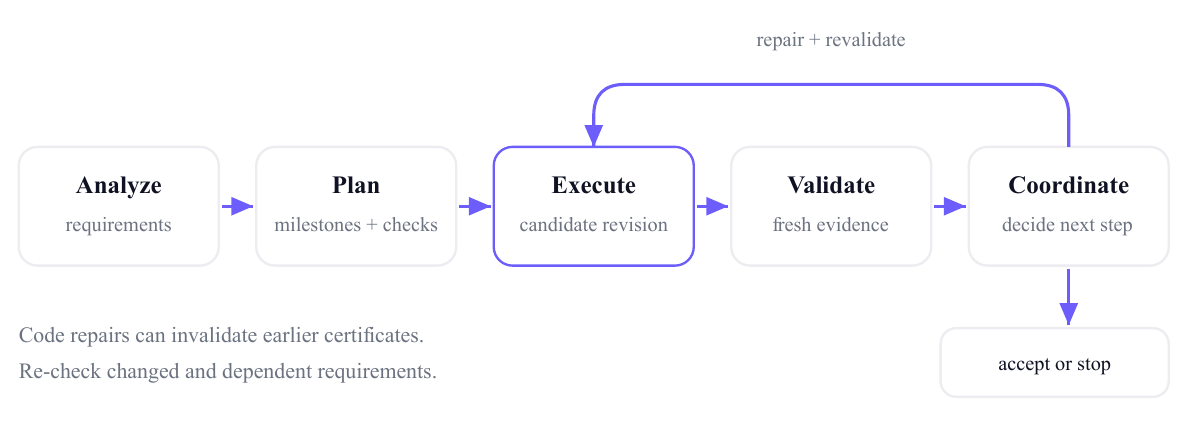}
\caption{Schematic execution--validation--repair architecture. Its consequence-operator abstraction requires a sufficient requirement state, monotone validation, and sound persistent certificates (Section 4.2). The operational repair cycle does not establish those assumptions.}\label{fig-1}
\end{figure}

\section{3. Preliminaries and Notation}\label{preliminaries-and-notation}

We write \(\mathbb{N}_0=\{0,1,\ldots\}\), \(\mathbb{N}_{+}=\{1,2,\ldots\}\), \(\mathbb{R}\) for the reals, and \(\mathbb{I}=[0,1]\). For finite \(X\), \(|X|\) denotes cardinality and \(2^X\) the power set. For nonempty finite or countable \(X\), \(\Delta(X)=\{p\in\mathbb{R}_{\ge0}^{X}:\sum_{x\in X}p_x=1\}\) is the set of discrete probability distributions. For \(d\in\mathbb{N}_{+}\), \(\mathbb{S}^{d-1}=\{v\in\mathbb{R}^d:\lVert v\rVert_2=1\}\), with Euclidean inner product \(\langle u,v\rangle\in[-1,1]\) for \(u,v\in\mathbb{S}^{d-1}\). For real \(x,a,b\) with \(a\le b\), \(\operatorname{clamp}(x;a,b)=\min(\max(x,a),b)\). The indicator \(\mathbf{1}\{E\}\) equals \(1\) when statement \(E\) holds and \(0\) otherwise.

\textbf{Order-theoretic objects.} A complete lattice \((L,\sqsubseteq)\) has a least upper bound and a greatest lower bound for every subset, including bottom and top elements. A map \(f:L\to L\) is \emph{monotone} when \(x\sqsubseteq y\) implies \(f(x)\sqsubseteq f(y)\), and \emph{inflationary} when \(x\sqsubseteq f(x)\). These are distinct properties. A fixed point satisfies \(f(x)=x\). A monotone map on a complete lattice has a least fixed point \(\operatorname{lfp}(f)\) \citep{ref-3}.

\textbf{Agent objects.} Let \(\mathcal{A}\) and \(\mathcal{O}\) be nonempty countable action and observation spaces. A distinguished outcome \(\bot_{\!a}\notin\mathcal{A}\) denotes refusal without execution, not the lattice bottom. Set \(\mathcal{A}_{\bot}=\mathcal{A}\cup\{\bot_{\!a}\}\) and include refusal observations in \(\mathcal{O}\). The history space \(\mathcal{H}=(\mathcal{A}_{\bot}\times\mathcal{O})^*\) contains all finite execution histories, including the empty history. A base policy \(\pi:\mathcal{H}\to\Delta(\mathcal{A})\) proposes ordinary actions. For a finite, possibly empty index set \(J\), let \(h_j:\mathcal{H}\times\mathcal{A}\to\{0,1\}\), \(j\in J\), be hooks; \(1\) means blocked.

\textbf{Task objects.} Fix a finite requirement set \(R=\{r_1,\ldots,r_N\}\), \(N\in\mathbb{N}_0\), and required subset \(R_{\mathrm{req}}\subseteq R\). The abstract state \(S\subseteq R\) records certified requirements. The lattice \((2^R,\subseteq)\) has join \(\cup\), meet \(\cap\), bottom \(\varnothing\), and top \(R\). The abstraction \(\nu:2^R\times R\to\{0,1\}\) determines which further certificates can be derived from \(S\). Representing evidence by requirement identifiers alone requires the sufficiency assumption in Section 4.2; an actual workspace, tests, and execution history contain more information.

\textbf{Metric objects.} Let \(\mathcal{D}\) be finite text descriptors and \(\mathcal{D}^{*}=\bigcup_{m\in\mathbb{N}_0}\mathcal{D}^{m}\) their finite sequences. The embedding \(\varphi\) returns a unit vector for nonzero embeddings; the implementation uses the zero vector for a descriptor with no embedding features. Thus its full codomain is \(\mathbb{S}^{d-1}\cup\{0\}\), and an exact text match need not score \(1\) for a zero vector. Signals \(s_k\in\mathbb{I}\) combine with weights \(w\in\Delta(\{1,\ldots,K\})\), \(K\in\mathbb{N}_{+}\).

\section{4. The Control Layer: Governance and Orchestration}\label{the-control-layer-governance-and-orchestration}

The action filter and requirement-closure operator describe different contracts. The first restricts proposed actions at a specified history; the second describes an idealized validation closure. Neither alone proves real-world task correctness.

\subsection{4.1 Governance as a projection on the action space}\label{governance-as-a-projection-on-the-action-space}

At fixed \(H\), let \(\mathcal{A}_G(H)=\{a\in\mathcal{A}:h_j(H,a)=0\text{ for every }j\in J\}\). Define \(G_H:\mathcal{A}_{\bot}\to\mathcal{A}_{\bot}\) by

\[ G_H(a)=\begin{cases}
 a,&a\in\mathcal{A}_G(H),\\
 \bot_{\!a},&a\notin\mathcal{A}_G(H).
 \end{cases} \]

In particular \(G_H(\bot_{\!a})=\bot_{\!a}\). Therefore \(G_H\circ G_H=G_H\): it is an idempotent retraction onto \(\mathcal{A}_G(H)\cup\{\bot_{\!a}\}\). The governed policy is the pushforward of the proposal policy:

\[ \pi_G(a\mid H)=\pi(a\mid H)\mathbf{1}\{a\in\mathcal{A}_G(H)\}
 \quad(a\in\mathcal{A}),\qquad
 \pi_G(\bot_{\!a}\mid H)=\sum_{a\in\mathcal{A}\setminus\mathcal{A}_G(H)}\pi(a\mid H). \]

Blocked probability becomes refusal mass, \textbf{not renormalized admissible mass}. Conditional resampling until an admissible action is obtained would be a different policy and is undefined if the original policy assigns zero mass to admissible actions. Filtering leaves accepted actions unchanged at this history, but can change later behavior and task utility.

\textbf{Assumption A1 (complete mediation and hook coverage).} Every effect in the claimed protection boundary is checked before execution against authoritative context, without a check/use race; every unsafe action in the specified threat class is blocked by at least one hook. Under A1 no action in that class executes. Pure predicates alone do not establish coverage, and neither false-positive rates nor semantic safety follow from idempotence. The benchmark must measure refusals and coverage separately.

\subsection{4.2 Orchestration: the completion loop as a consequence operator}\label{orchestration-the-completion-loop-as-a-consequence-operator}

For a \textbf{fixed} oracle and requirement set define

\[ \Phi(S)=S\cup\{r\in R:\nu(S,r)=1\}. \]

\textbf{Assumption A2 (oracle and evidence).} Requirement identifiers are a sufficient summary for this deterministic rule. Validation is monotone: \(S\subseteq T,\ \nu(S,r)=1\Rightarrow\nu(T,r)=1\). Whenever the certificates in \(S\) are valid for the specified task, every new certificate \(r\in R\setminus S\) with \(\nu(S,r)=1\) must be supported by valid evidence and correct for that task. Accepted transitions must preserve previously valid certificates. The union makes \(\Phi\) inflationary; the additional monotonicity assumption makes it monotone. Append-only storage alone does \textbf{not} imply that assumption. For example, \(R=\{a,b\}\), \(\nu(S,a)=0\) for all \(S\), and \(\nu(S,b)=\mathbf{1}\{a\notin S\}\) give \(\Phi(\varnothing)=\{b\}\) but \(\Phi(\{a\})=\{a\}\).

\begin{theorem}[finite closure, conditional soundness, fair-schedule independence]\label{thm-1}

Let \(\Phi:2^R\to2^R\) be monotone and inflationary. Starting at \(S_0=\varnothing\), the iterates \(S_{t+1}=\Phi(S_t)\) reach \(S_*=\operatorname{lfp}(\Phi)\) after at most \(|R|\) strict increases. A further evaluation may be needed to detect stabilization. With A2's soundness and persistence conditions, all certificates in \(S_*\) remain valid. Define PASS at closure by \(R_{\mathrm{req}}\subseteq S_*\); PASS then certifies those requirements relative to the oracle, not all possible correctness properties. Starting at \(\varnothing\), adding enabled consequences in any \textbf{fair} schedule reaches the same \(S_*\).

\end{theorem}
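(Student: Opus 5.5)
The plan is to run the standard Kleene-iteration argument on the finite lattice \((2^R,\subseteq)\), taking the four claims in turn.

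\textbf{Ascending chain and bounded increases.} Since \(\Phi\) is inflationary, \(S_t\subseteq\Phi(S_t)=S_{t+1}\), so the iterates form an ascending chain. Each strict increase \(S_t\subsetneq S_{t+1}\) adds at least one element of \(R\), so \(|S_t|\) rises by at least one. As \(|S_t|\le|R|\), there are at most \(|R|\) strict increases. Once \(S_{t+1}=S_t\), the point \(S_t\) is fixed and every later iterate equals it. Detecting this needs one more evaluation of \(\Phi\), which gives the remark about stabilization.

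\textbf{Leastness.} Let \(T\) be any fixed point, or more generally any prefixed point with \(\Phi(T)\subseteq T\). Induct on \(t\) using monotonicity: \(S_0=\varnothing\subseteq T\), and if \(S_t\subseteq T\) then \(S_{t+1}=\Phi(S_t)\subseteq\Phi(T)\subseteq T\). So the stable value \(S_*\) is contained in every fixed point, and \(S_*=\operatorname{lfp}(\Phi)\). This agrees with Knaster--Tarski but does not depend on it.

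\textbf{Soundness.} Induct along the chain with A2. The base case is vacuous, since \(S_0=\varnothing\). If every certificate in \(S_t\) is valid, each new \(r\in S_{t+1}\setminus S_t\) has \(\nu(S_t,r)=1\), so A2's soundness clause makes it valid and correct. The persistence clause keeps the old certificates valid. Hence all of \(S_*\) is valid. The PASS statement then follows from the definition: \(R_{\mathrm{req}}\subseteq S_*\) means each required requirement carries a certificate that is valid relative to the oracle, and nothing more is claimed.

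\textbf{Fair-schedule independence.} This is the main obstacle, because "fair schedule" must be made precise first. I would define a schedule as a sequence \(U_0=\varnothing, U_{k+1}=U_k\cup A_k\) with \(A_k\subseteq\{r:\nu(U_k,r)=1\}\). Fairness means that any \(r\) enabled at some \(U_k\) (and hence, by monotonicity, at every later \(U_j\)) is eventually added. Two containments are needed.

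\emph{Upper bound.} Each \(U_k\subseteq S_*\), by induction: \(U_k\subseteq S_*\) gives \(\nu(U_k,r)=1\Rightarrow\nu(S_*,r)=1\), so \(r\in\Phi(S_*)=S_*\).

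\emph{Stabilization.} The \(U_k\) form a chain in a finite set, so they stabilize at some \(U_\infty\subseteq S_*\).

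\emph{Lower bound.} Fairness makes \(U_\infty\) closed. Any \(r\) with \(\nu(U_\infty,r)=1\) is enabled from some index onward and so is eventually added. Thus \(\Phi(U_\infty)=U_\infty\), and leastness gives \(S_*\subseteq U_\infty\).

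The delicate point is that fairness must be stated as "persistently enabled implies eventually added". Monotonicity of \(\nu\) is exactly what makes enabledness persist.
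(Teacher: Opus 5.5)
Your proposal is correct and follows the paper's proof essentially step for step. Both arguments use the inflationary ascending chain with at most \(|R|\) strict increases, induction below an arbitrary fixed point for leastness, and induction on certified additions for soundness; for fair schedules, both bound the state above by \(S_*\), use fairness to show the eventual state is a fixed point, and then apply leastness. The one difference is cosmetic: your upper-bound step uses A2's monotonicity of \(\nu\), while the paper uses the theorem's own hypothesis (each added requirement lies in \(\Phi(U_k)\subseteq\Phi(S_*)=S_*\)), and that weaker hypothesis also suffices for your persistence-of-enabledness remark.
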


\begin{proof}

Inflationarity gives an increasing chain. Each strict increase adds at least one of the \(|R|\) elements, so the chain stabilizes. If \(F\) is any fixed point, then \(S_0\subseteq F\) and monotonicity implies \(S_{t+1}=\Phi(S_t)\subseteq\Phi(F)=F\) by induction. The stabilized state is thus the least fixed point. Soundness follows by induction on certified additions, using both oracle soundness and persistence. For asynchronous addition, every added requirement is in \(\Phi(S)\), so the same induction keeps the state below \(S_*\). Fairness means no persistently enabled missing requirement is postponed forever. Since there can be only finitely many additions, the eventual state has no enabled missing requirement and is a fixed point; leastness makes it \(S_*\). \qedhere

\end{proof}

The theorem proves derivability under \(\nu\), \textbf{not} that \(S_*\) is exactly the set of requirements achievable by arbitrary agents. Completeness of the oracle/generator would be an additional assumption. An unfair scheduler can omit enabled work forever. Detecting one unproductive \emph{stochastic} attempt is not detecting a fixed point of a deterministic exhaustive consequence operator.

\textbf{Implementation boundary.} Real repair passes edit a shared artifact and can regress previously passing checks. A historical union of successful checks is not a current correctness certificate. The benchmark uses bounded retries, observation-dependent validation, and score-based candidate selection; these are not an implementation proof of A2 or exhaustive closure. Fresh validation of the final artifact is needed for PASS. Raw coding harnesses also take multiple internal actions; a single benchmark invocation is not a single consequence or a single requirement addition.

\begin{figure}
\centering
\includegraphics[width=1\linewidth,height=\textheight,keepaspectratio]{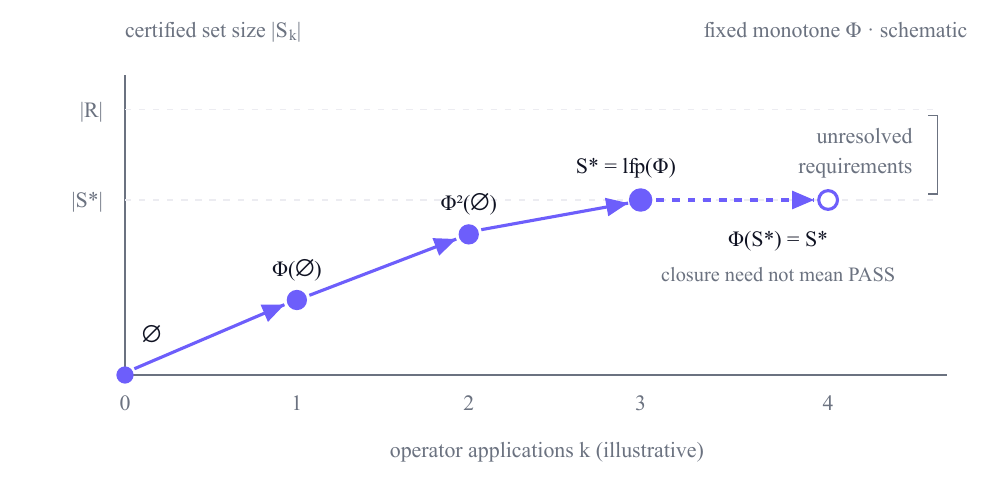}
\caption{Illustrative Kleene ascent for the fixed monotone operator of Theorem 1, not a measured agent trace. At most \(|R|\) strict increases occur; one further evaluation may detect closure. A fair schedule reaches the same least fixed point, which can leave required work unresolved.}\label{fig-2}
\end{figure}

\subsection{4.3 Effective feedback and the value of computation}\label{effective-feedback-and-the-value-of-computation}

Define the count \(\mathrm{efc}(S)=|\Phi(S)\setminus S|\) for the abstract operator. It measures newly certified requirements, motivated by retained-feedback accounts \citep{ref-21}. Along iteration from \(\varnothing\) under A2, these additions are valid, nonredundant, and retained. Their information content and utility are not necessarily equal.

\begin{proposition}[effective-feedback accounting]\label{prop-1}

If \(S_0=\varnothing\) and \(S_T=S_*=\operatorname{lfp}(\Phi)\) is the stabilized iterate, then \(\sum_{t=0}^{T-1}|S_{t+1}\setminus S_t|=|S_*|\). If \(R_{\mathrm{req}}\subseteq S_*\), exactly \(|R_{\mathrm{req}}|\) of these additions are required certificates. The \textbf{total} number of additions before first completion can exceed \(|R_{\mathrm{req}}|\) because optional requirements can be added too.

\end{proposition}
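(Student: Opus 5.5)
The plan is a telescoping argument along the Kleene chain of Theorem~\ref{thm-1}. Because \(\Phi\) is inflationary, the iterates form an increasing chain \(\varnothing=S_0\subseteq S_1\subseteq\cdots\subseteq S_T=S_*\). For nested finite sets we have \(|S_{t+1}\setminus S_t|=|S_{t+1}|-|S_t|\). Summing over \(t=0,\ldots,T-1\) telescopes to \(|S_T|-|S_0|=|S_*|-0=|S_*|\).

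I would also record the structural fact behind the telescoping. The increments \(\Delta_t=S_{t+1}\setminus S_t\) are pairwise disjoint, since \(\Delta_t\subseteq S_{t+1}\subseteq S_u\) for \(u>t\) while \(\Delta_u\cap S_u=\varnothing\). Their union is \(S_*\setminus S_0=S_*\), because every element of \(S_*\) first appears at a unique step. So the increments partition \(S_*\). Each \(\Delta_t\) equals \(\Phi(S_t)\setminus S_t\), which means the summand is exactly \(\mathrm{efc}(S_t)\). This identifies the sum with total effective feedback, matching the interpretation in Section~4.3.

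For the second claim, intersect the partition with \(R_{\mathrm{req}}\). The sets \(\Delta_t\cap R_{\mathrm{req}}\) partition \(S_*\cap R_{\mathrm{req}}\). Under the hypothesis \(R_{\mathrm{req}}\subseteq S_*\), that intersection is all of \(R_{\mathrm{req}}\). Hence \(\sum_t|\Delta_t\cap R_{\mathrm{req}}|=|R_{\mathrm{req}}|\): exactly that many additions are required certificates, each added once.

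For the final remark, I would exhibit a small example showing the total can exceed \(|R_{\mathrm{req}}|\). Take \(R=\{a,b\}\), \(R_{\mathrm{req}}=\{b\}\), and \(\nu(S,r)=1\) for all \(S,r\), so \(\Phi\) is monotone. Then \(S_1=R\) contains the optional \(a\). Both elements are added before or at first completion, so the count is \(2>1\).

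The only point needing care is the phrase ``before first completion''. If the first time \(R_{\mathrm{req}}\subseteq S_t\) occurs before \(T\), the statement should be read as a bound comparing the prefix sum \(|S_t|\) with \(|R_{\mathrm{req}}|\). The same telescoping gives \(|S_t|\ge|R_{\mathrm{req}}|\), with strict inequality whenever optional elements are present. Otherwise the argument is routine finite-set bookkeeping.
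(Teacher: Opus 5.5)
Your proposal is correct and matches the paper's proof, which likewise observes that the disjoint increments \(S_{t+1}\setminus S_t\) partition \(S_*\) and then intersects that partition with \(R_{\mathrm{req}}\). The telescoping computation and your explicit \(\nu\equiv1\) example are harmless additions. The paper gives essentially the same example right after the proof, as \(\Phi(S)=R\).
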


\begin{proof}

The disjoint differences \(S_{t+1}\setminus S_t\) partition \(S_*\). Intersecting that partition with \(R_{\mathrm{req}}\) gives the required-only identity. \qedhere

\end{proof}

One productive pass can add any number from \(1\) to \(|R\setminus S|\). For example, \(\Phi(S)=R\) completes arbitrarily many requirements in one pass. Consequently no lower bound on required passes, no impossibility for single-invocation harnesses, and no advantage over independent sampling follows from this accounting. An append-only ledger prevents duplicate \textbf{credit}, not duplicate computation. Positive decision-theoretic value of information or net value of computation requires an explicit utility/cost model \citep{ref-22,ref-23}; \(\mathrm{efc}>0\) is not equivalent to either.

\subsection{4.4 Probabilistic execution over the fixed-point lattice}\label{probabilistic-execution-over-the-fixed-point-lattice}

A stochastic process is separate from the deterministic closure model. Let \(\nu_{\mathrm{obs}}:\mathcal{H}\times2^R\times\mathcal{A}_{\bot}\times
\mathcal{O}\times R\to\{0,1\}\) validate observed executions. Define \(U(H,S,a,o)=S\cup\{r\in R:\nu_{\mathrm{obs}}(H,S,a,o,r)=1\}\) as historical certificate accumulation. A distinct current-artifact validation set may shrink. For a requirement-only, time-homogeneous Markov abstraction, \textbf{assume} the conditional law of the next requirement state depends on the past only through \(S\). Then

\[ P(S,S')=\sum_{\substack{a\in\mathcal{A}_{\bot},\,o\in\mathcal{O}\\U(S,a,o)=S'}}
 \pi_G(a\mid S)\,E(o\mid S,a),\qquad \sum_{S'\subseteq R}P(S,S')=1, \]

where \(\pi_G(\cdot\mid S)\in\Delta(\mathcal{A}_{\bot})\) and \(E(\cdot\mid S,a)\in\Delta(\mathcal{O})\) is the observation distribution, including a refusal observation for \(\bot_{\!a}\). Each of \(U,\pi_G,E\) must admit this sufficient-state representation. For general history-dependent execution use \(X_t=(H_t,S_t)\) instead, including all relevant environment state in the history. Aggregating its transitions by \(S\) alone does not in general produce a Markov chain. Historical certificate accumulation gives \(P(S,S')=0\) unless \(S\subseteq S'\) but does not prove current-artifact correctness.

\textbf{Completion as a hitting time.} Let \(C=\{S\subseteq R:R_{\mathrm{req}}\subseteq S\}\) and \(T_C=\inf\{t\in\mathbb{N}_0:S_t\in C\}\), with \(\inf\varnothing=\infty\). For an increasing process \(C\) is a closed set; make its states absorbing if the run stops at first completion. Specify the initial state \(S_0\) for all probabilities and expectations.

\begin{proposition}[conditional completion time]\label{prop-2}

For a finite increasing Markov chain, suppose every reachable \(S\notin C\) has \(\Pr(S_{t+1}\supsetneq S_t\mid S_t=S)\ge\varepsilon\), where \(0<\varepsilon\le1\). Then \(\Pr(T_C<\infty)=1\) and \(\mathbb{E}[T_C]\le (|R|-|S_0|)/\varepsilon\).

\end{proposition}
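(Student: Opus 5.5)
The plan is to reduce completion to a bounded number of strict increases, each of which takes a geometric waiting time. Write \(M=|R|-|S_0|\). Because the chain is increasing, \(S_t\subseteq S_{t+1}\) almost surely. Every strict increase adds at least one element of \(R\setminus S_0\), so at most \(M\) strict increases can occur along any path.

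Next we check that the path must stop at a state in \(C\). A reachable state \(S\notin C\) cannot be a stopping point: the hypothesis gives it probability at least \(\varepsilon>0\) of a strict increase, and the Markov property applies afresh at each visit. A state with \(S=R\) lies in \(C\), since \(R_{\mathrm{req}}\subseteq R\). Hence, as long as the process remains outside \(C\), it keeps making strict increases, and it can do so at most \(M\) times. This step also handles \(M=0\): then \(S_0=R\in C\) and \(T_C=0\), so the bound holds trivially.

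To bound the time, introduce a potential. Define \(D_t=|S_{t\wedge T_C}|\) for the process stopped at \(T_C\). Consider the process
\[ Z_t=\varepsilon\,(t\wedge T_C)-\bigl(D_t-|S_0|\bigr). \]
The claim is that \(Z_t\) is a supermartingale. On \(\{T_C>t\}\) we have \(\mathbb{E}[D_{t+1}-D_t\mid\mathcal{F}_t]\ge\Pr(S_{t+1}\supsetneq S_t\mid S_t)\ge\varepsilon\), because a strict increase raises the cardinality by at least one and cardinality never decreases. Therefore \(\mathbb{E}[Z_{t+1}-Z_t\mid\mathcal{F}_t]\le0\). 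Taking expectations gives
\[ \varepsilon\,\mathbb{E}[t\wedge T_C]\le\mathbb{E}[D_t]-|S_0|\le M \]
for every \(t\). Letting \(t\to\infty\), monotone convergence yields \(\mathbb{E}[T_C]\le M/\varepsilon\). In particular \(T_C<\infty\) almost surely, which is the first claim, so it needs no separate argument.

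An alternative route avoids supermartingales. Decompose \(T_C\) into the sojourn times between successive strict increases. While the chain sits at a fixed \(S\notin C\), each step leaves \(S\) with probability \(p_S\ge\varepsilon\). The sojourn is therefore geometric with mean \(1/p_S\le1/\varepsilon\); self-loops are the only alternative to a strict increase. Since there are at most \(M\) sojourns before the chain enters \(C\) (or reaches \(R\in C\)), summing gives the same bound. The main subtlety in this route is that the sojourns are not identically distributed and their number is random. I would handle this by writing \(T_C=\sum_{k}\tau_k\mathbf{1}\{k\le K\}\), with \(K\le M\). Each conditional expectation \(\mathbb{E}[\tau_k\mid\text{state at entry},\,k\le K]\le1/\varepsilon\) follows from the strong Markov property. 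The supermartingale version above sidesteps this bookkeeping, so I would present that one.

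Finally, note where each hypothesis is used. Finiteness of \(R\) is what gives \(M<\infty\), and the uniform lower bound \(\varepsilon\) is what controls the drift. Without uniform positivity, for instance if \(\varepsilon\) were allowed to degrade along the path, the bound would fail.
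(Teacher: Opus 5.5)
Your proof is correct, and your main route differs from the paper's. Your ``alternative'' is essentially the paper's proof. The paper notes that at most \(|R|-|S_0|\) strict increases precede completion (since \(R\in C\)), dominates each pre-completion wait by a geometric variable of mean \(1/\varepsilon\), and sums expectations without needing independence of the holding times. It then reads off almost-sure completion from the finite expectation.

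Your primary argument is instead an additive-drift bound. The stopped process \(Z_t\) is a supermartingale with \(Z_0=0\), so \(\varepsilon\,\mathbb{E}[t\wedge T_C]\le M\) for every \(t\), and monotone convergence gives both claims at once. This route buys three things:
\begin{itemize}
\item You never split \(T_C\) into a random number of sojourns or appeal to the strong Markov property.
\item You use only the one-step conditional drift given \(\mathcal{F}_t\). The argument therefore carries over unchanged to the history-dependent process \(X_t=(H_t,S_t)\) of Section 4.4, whenever the progress bound holds conditionally on the full history.
\item Your chain of inequalities passes through the expected cardinality gain, so the conclusion holds under the weaker hypothesis that this expected gain is at least \(\varepsilon\) before completion. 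That covers passes that certify several requirements at once with small probability.
\end{itemize}

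The paper's sojourn view is more elementary and makes the ``at most \(M\) geometric waits'' structure explicit. Pushed slightly further, it also gives more: coupling each pre-completion step with an independent Bernoulli\((\varepsilon)\) coin dominates \(T_C\) stochastically by a negative binomial variable. That yields tail bounds on \(\Pr(T_C>n)\), relevant to the \(\Pr(T_C\le n)\) diagnostic in Table 1, whereas the drift bound alone gives only Markov-inequality tails.
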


\begin{proof}

Since \(R\in C\), at most \(|R|-|S_0|\) strict increases can precede completion. Before completion, the wait for each increase is dominated by a geometric variable of mean \(1/\varepsilon\). Their sum bounds \(T_C\) and its expectation; independence of the holding times is unnecessary. A finite expectation implies almost-sure completion. \qedhere

\end{proof}

An incomplete absorbing state violates the progress assumption. Without that assumption, completion need not occur and \(\mathbb{E}[T_C]\) may be infinite. Governance changes the execution law through the refusal-mass filter of Section 4.1; comparing unsafe-event probabilities additionally requires an explicit unsafe-event indicator on the augmented state, not merely a subset of satisfied requirements. A difference of expected completion times is meaningful only when both expectations are finite. No transition probabilities or completion-time estimates are supplied by the lattice theorem.

\section{5. Evaluation}\label{evaluation}

\subsection{5.1 VERTEX trajectory similarity}\label{vertex-trajectory-similarity}

This implementation of VERTEX combines descriptor cross-similarity with an order penalty, inspired by trajectory evaluation \citep{ref-1,ref-19}. For nonempty sequences \(c=(c_1,\ldots,c_m)\) and \(r=(r_1,\ldots,r_n)\), \(m,n\in\mathbb{N}_{+}\), define \(\mathbf{S}_{ij}=\langle\varphi(c_i),\varphi(r_j)\rangle\in[-1,1]\). For unit embeddings this is cosine similarity; zero-vector handling is as in Section 3. Either empty sequence receives score \(0\) by convention. Figure 3 shows the two components.

\emph{Presence (unordered).} A BERTScore-style bidirectional best match \citep{ref-11}, where \(P\) (precision) measures how well each candidate descriptor is matched by some reference and \(Q\) (recall) how well each reference descriptor is matched by some candidate:

\[\begin{gathered} P=\frac{1}{m}\sum_{i=1}^{m}\max(0,\max_j\mathbf{S}_{ij}),\\
 Q=\frac{1}{n}\sum_{j=1}^{n}\max(0,\max_i\mathbf{S}_{ij}),\\
 F=\begin{cases}2PQ/(P+Q),&P+Q>0,\\0,&P+Q=0.\end{cases} \end{gathered}\]

Thus \(P,Q,F\in\mathbb{I}\) and \(\min(P,Q)\le F\le\max(P,Q)\). Negative similarities still enter the order cost and the matrix-mean baseline below.

\emph{Order.} Dynamic time warping \citep{ref-12} minimizes over paths \(\Gamma\) from \((1,1)\) to \((m,n)\) with steps \((1,0)\), \((0,1)\), or \((1,1)\). For finite \(\lambda\ge0\), the cost is \(d(i,j)=(1-\mathbf{S}_{ij})(1+\lambda|(i-1)/m-(j-1)/n|)\):

\[ \mathrm{DTW}=\min_{\gamma\in\Gamma}\sum_{(i,j)\in\gamma}d(i,j),\qquad
   D=\max\!\Bigl(0,\,1-\tfrac{\mathrm{DTW}}{m+n}\Bigr)\in\mathbb{I}, \]

The path has between \(\max(m,n)\) and \(m+n-1\) entries. The denominator \(m+n\) is a chosen normalization, not the path length or an upper bound on the \textbf{cost}.

\emph{Matrix-mean recalibration.} For \(0<\epsilon<1\), let \(b=\operatorname{clamp}((mn)^{-1}\sum_{i,j}\mathbf{S}_{ij};0,1-\epsilon)\) and \(\eta_b(x)=\operatorname{clamp}((x-b)/(1-b);0,1)\). The raw mean lies in \([-1,1]\), and the clamp ensures a positive denominator. It is the mean of this candidate--reference pair, \textbf{not} an independently estimated null expectation. For finite \(\alpha\in[0,1]\):

\[ \mathrm{VERTEX}(c,r)\;=\;\alpha\,\eta_b(F)+(1-\alpha)\,\eta_b(D)\;\in\;\mathbb{I}. \]

The benchmark configuration fixes \(\alpha=0.6\), \(\lambda=0.5\), and \(\epsilon=0.01\) (the baseline-clamp margin). Proposition 3 records the range and the exact sense of the calibration.

\begin{proposition}[range and anchors]\label{prop-3}

VERTEX lies in \(\mathbb{I}\). The normalizer is nondecreasing on \(\mathbb{R}\) and strictly increasing on \([b,1]\), with \(\eta_b(b)=0\) and \(\eta_b(1)=1\). Identical sequences of nonzero unit embeddings score \(1\) in exact arithmetic.

\end{proposition}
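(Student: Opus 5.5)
The plan is to verify the three claims in order: first the properties of the normalizer \(\eta_b\), then the range of VERTEX, then the identical-sequence anchor. Throughout I use the parameter ranges from Section 5.1 (\(0<\epsilon<1\), \(\alpha\in[0,1]\), \(\lambda\ge0\), \(m,n\ge1\)).

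\textbf{The normalizer.} By construction \(b=\operatorname{clamp}(\bar{\mathbf S};0,1-\epsilon)\in[0,1-\epsilon]\), so \(1-b\ge\epsilon>0\). Hence \(x\mapsto(x-b)/(1-b)\) is an affine map with positive slope, strictly increasing on \(\mathbb{R}\). Composing it with the nondecreasing map \(\operatorname{clamp}(\cdot;0,1)\) gives a nondecreasing function on \(\mathbb{R}\). On \([b,1]\) the affine image lies in \([0,1]\), so the clamp acts as the identity there. Therefore \(\eta_b\) is strictly increasing on \([b,1]\), with \(\eta_b(b)=0\) and \(\eta_b(1)=1\).

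\textbf{The range.} The clamp gives \(\eta_b(x)\in\mathbb{I}\) for every real \(x\), in particular for \(x=F\) and \(x=D\). VERTEX is a convex combination of these two values with \(\alpha\in[0,1]\), so it lies in \(\mathbb{I}\). The empty-sequence convention assigns \(0\in\mathbb{I}\), which covers the remaining case.

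\textbf{The anchor.} Take \(c=r\) with \(m=n\) and every \(\varphi(c_i)\in\mathbb{S}^{d-1}\). Then \(\mathbf S_{ii}=\langle\varphi(c_i),\varphi(c_i)\rangle=1\), and Cauchy--Schwarz gives \(\mathbf S_{ij}\le1\) for all \(i,j\).

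\begin{itemize}
\item \emph{Presence.} Each row and column maximum equals \(1\), so \(P=Q=1\) and \(F=1\). Hence \(\eta_b(F)=1\) by the anchor just proved.
\item \emph{Order.} The diagonal path has \(d(i,i)=(1-1)(1+\lambda\cdot0)=0\) at every entry. All costs satisfy \(d\ge0\), because \(1-\mathbf S_{ij}\ge0\) and the penalty factor is at least \(1\). So \(\mathrm{DTW}=0\), giving \(D=1\) and \(\eta_b(D)=1\).
\end{itemize}

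It follows that \(\mathrm{VERTEX}=\alpha+(1-\alpha)=1\).

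The main subtlety, rather than an obstacle, is the requirement that \(\eta_b(1)=1\) hold for \emph{every} admissible \(b\). The upper clamp at \(1-\epsilon\) is exactly what ensures this. Even if the raw matrix mean equals \(1\), for example when all descriptors coincide, the denominator stays positive and \(\eta_b(1)=(1-b)/(1-b)=1\). I would state this explicitly. I would also note that the anchor requires nonzero embeddings. If some \(\varphi(c_i)=0\), then \(\mathbf S_{ii}=0\), and both \(F=1\) and \(\mathrm{DTW}=0\) can fail. This is why the statement restricts to nonzero unit embeddings and to exact arithmetic.
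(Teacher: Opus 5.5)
Your proposal is correct and follows essentially the same argument as the paper: the normalizer has positive slope $1/(1-b)$ and its clamp is inactive on $[b,1]$, a convex mixture gives the range, and unit diagonal similarities together with a zero-cost diagonal DTW path give the anchor. The one minor difference is that you obtain the range directly from the outer clamp of $\eta_b$, whereas the paper first shows $F,D\in\mathbb{I}$ via floored best matches, the harmonic-mean bounds and nonnegative costs; your shortcut suffices for this proposition, while the paper's route also records the componentwise bounds it uses elsewhere.
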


\begin{proof}

All floored best matches lie in \(\mathbb{I}\); the harmonic mean lies between its arguments. Costs are nonnegative, hence the clamped \(D\) lies in \(\mathbb{I}\). The normalizer has the stated endpoints and slope \(1/(1-b)>0\) on its unclamped interval. The mixture is convex. Identical unit-vector sequences have unit diagonal similarities, \(P=Q=1\), and a zero-cost diagonal path, so \(F=D=1\). \qedhere

\end{proof}

There is \textbf{no universal chance floor}. A constant zero similarity matrix with \(m=n\) has \(F=0,D=1/2\), hence score \(0.2\) at the default mixture. With \(m=1,n=9\) and \(\lambda=0.5\), the same zero similarities give path cost \(11\), \(D=0\), and score \(0\). Lengths, order, and embedding geometry therefore affect unrelated-input scores. Maxima need not strictly exceed a mean, and the DTW term need not concentrate near \((1+b)/2\). Comparisons require fixed extraction and embedding protocols; they are similarity comparisons, not calibrated probabilities or correctness certificates.

\begin{figure}
\centering
\includegraphics[width=1\linewidth,height=\textheight,keepaspectratio]{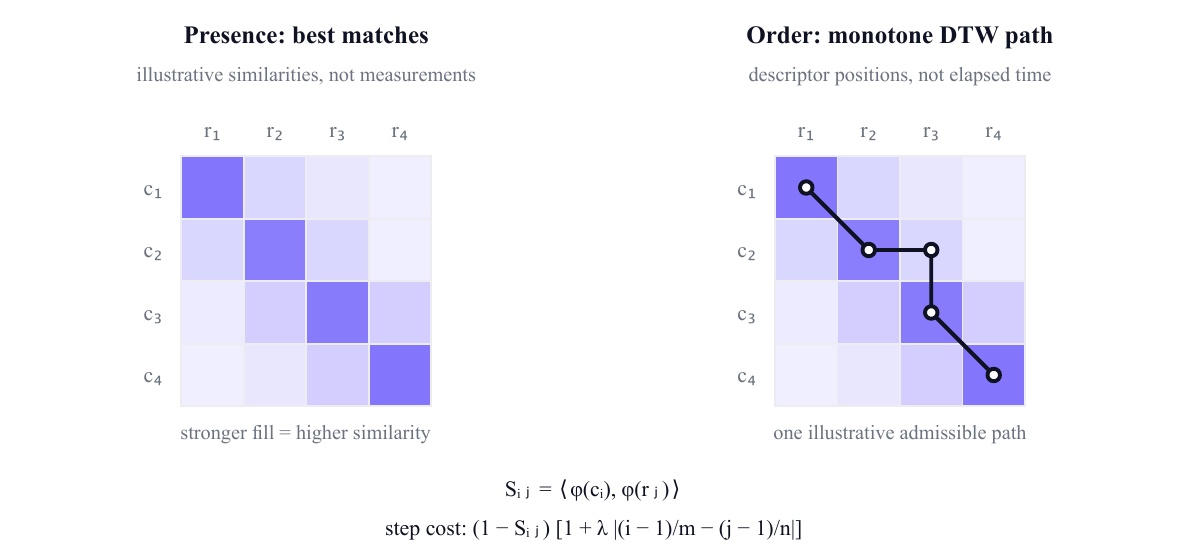}
\caption{The two components of the VERTEX trajectory score (Section 5.1). Left: the cross-similarity matrix \(\mathbf{S}_{ij}=\langle\varphi(c_i),\varphi(r_j)\rangle\) between candidate descriptors \(c_i\) (rows) and reference descriptors \(r_j\) (columns); stronger color indicates higher similarity in this illustrative matrix. The \emph{presence} component reads bidirectional best matches off this matrix (which reference behaviors appear, and how strongly). Right: the \emph{order} component uses a minimum-cost monotone DTW alignment. Its step cost \((1-\mathbf{S}_{ij})(1+\lambda|\frac{i-1}{m}-\frac{j-1}{n}|)\) increases with normalized positional displacement. Both components are recalibrated against the mean-similarity baseline (Proposition 3).}\label{fig-3}
\end{figure}

\subsubsection{5.1.1 VERTEX-QE: estimated references}\label{vertex-qe-estimated-references}

The project task can compare a candidate with hand-authored capability and architecture descriptors. These references select particular realizations of an open-ended brief. VERTEX-QE instead estimates descriptors from public sources. Each invocation of the Section 5.1 kernel retains the same presence score \(F\), order score \(D\), and matrix-mean recalibration \(\eta_b\); its reference argument becomes an estimate \(\hat r\). The project-level mixture also changes its architecture weight, as defined below.

Capability descriptors come from the public brief's enumerated outcomes. Architecture descriptors combine its engineering expectations with a small synthetic vocabulary of common web-application concerns: HTTP routing, frontend components, separation of domain logic from transport, commerce entities, tests, and environment configuration. The vocabulary spans six authored patterns, including a layered monolith, a separate frontend and REST API, server-side rendering, a commerce-service split, a ports-and-adapters core, and a static frontend with serverless APIs. These are illustrative design patterns, not observations from six sampled repositories. Empty strings and exact duplicates are removed within each source; the union greedily drops later descriptors whose cosine similarity to a retained one is at least \(0.92\). The resulting list has a deterministic order that affects DTW, although it does not describe execution time or a distribution of valid architectural alternatives.

For the architecture mixture, let \(L\) be the number of nonempty cleaned source lists. Write list \(\ell\) as \((x_{\ell1},\ldots,x_{\ell n_\ell})\), where \(n_\ell\ge1\). For distinct sources \(\ell,k\in\{1,\ldots,L\}\), directed agreement is

\[ a_{\ell k}=\frac{1}{n_\ell}\sum_{i=1}^{n_\ell}
 \max\!\left(0,\max_{1\le j\le n_k}
 \langle\varphi(x_{\ell i}),\varphi(x_{kj})\rangle\right). \]

The resolver sets \(q_{\mathrm{arch}}=0\) for \(L=0\), \(1/2\) for \(L=1\), and otherwise averages \((a_{\ell k}+a_{k\ell})/2\) over all unordered source pairs, rounded to four decimal places. Thus \(q_{\mathrm{arch}}\in[0,1]\). With capability and architecture VERTEX scores \(V_{\mathrm{cap}}\) and \(V_{\mathrm{arch}}\), the project signal is

\[ V_P=\frac{0.7V_{\mathrm{cap}}+0.3q_{\mathrm{arch}}V_{\mathrm{arch}}}
 {0.7+0.3q_{\mathrm{arch}}}. \]

Authored-reference mode uses \(q_{\mathrm{arch}}=1\). In QE mode an empty architecture union triggers a recorded authored-reference fallback, but \(q_{\mathrm{arch}}=0\) then removes that architecture score from \(V_P\). The fallback can therefore appear in diagnostics without contributing authored architecture evidence to the composite. Agreement is not a calibrated probability of correctness. COMET, SUPERT, and MAUVE \citep{ref-47,ref-48,ref-49} provide reference-estimation and distributional-evaluation context, not validation of this rule.

Pool-level QE diagnostics additionally use leave-one-out peer descriptors supported by at least two peers. They are not inputs to the per-candidate composite above. This is a heuristic, not a minimum-Bayes-risk estimator or a consistent latent-truth estimator justified by \citep{ref-50,ref-51}. Even independent peers that include a false descriptor with probability \(0.1\) will, with probability tending to one, exceed a fixed two-peer threshold as the pool grows. Leave-one-out removes literal self-reference, not correlations between related models. The weak-supervision literature \citep{ref-53} is background, not a proof that unmodeled source agreement is calibrated.

An unused contrastive helper computes a softmax assignment score. Equal logits give \(1/B_{\mathrm{batch}}\) for \(B_{\mathrm{batch}}\) alternatives, but individual scores can be smaller. This is not a universal chance floor or itself a mutual-information lower bound: InfoNCE \citep{ref-52} bounds mutual information through an \textbf{expected loss} under specified sampling assumptions. No active multi-brief contrastive evaluation is implemented.

\begin{proposition}[reference replacement and mixtures]\label{prop-4}

Replacing a descriptor reference preserves Proposition 3's range bound. The convex mixture \(V_P\) is also in \(\mathbb{I}\). Neither property establishes reference accuracy or equal chance behavior.

\end{proposition}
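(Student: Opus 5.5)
The plan is to reduce everything to two elementary facts: Proposition 3 holds pointwise in its reference argument, and a convex combination of numbers in \(\mathbb{I}\) stays in \(\mathbb{I}\).

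\textbf{Reference replacement.} The proof of Proposition 3 uses no property of the reference sequence \(r\) beyond its being a finite nonempty descriptor sequence. Its embeddings need only lie in \(\mathbb{S}^{d-1}\cup\{0\}\), so that every \(\mathbf{S}_{ij}\in[-1,1]\). An estimated reference \(\hat r\) produced by the QE cleaning and greedy deduplication is again such a sequence. If cleaning leaves it empty, the empty-sequence convention assigns score \(0\in\mathbb{I}\). Otherwise I would rerun the earlier argument with \(\hat r\) in place of \(r\):
\begin{itemize}
\item the floored best matches give \(P,Q\in\mathbb{I}\), and hence \(F\in\mathbb{I}\);
\item the DTW costs are nonnegative, so the clamped \(D\in\mathbb{I}\);
\item the baseline \(b\in[0,1-\epsilon]\) makes \(\eta_b\) well defined with values in \(\mathbb{I}\);
\item with \(\alpha\in[0,1]\), the VERTEX mixture lies in \(\mathbb{I}\).
\end{itemize}
Applying this to each invocation gives \(V_{\mathrm{cap}},V_{\mathrm{arch}}\in\mathbb{I}\) in both authored and QE modes, including the fallback case.

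\textbf{Mixture.} Next I would check that \(q_{\mathrm{arch}}\in[0,1]\) in all three branches of the resolver:
\begin{itemize}
\item for \(L=0\), \(q_{\mathrm{arch}}=0\);
\item for \(L=1\), \(q_{\mathrm{arch}}=1/2\);
\item for \(L\ge2\), each directed agreement \(a_{\ell k}\) is an average of floored maxima of inner products of vectors in \(\mathbb{S}^{d-1}\cup\{0\}\), so \(a_{\ell k}\in\mathbb{I}\); symmetrizing and averaging preserve \(\mathbb{I}\).
\end{itemize}
Rounding to four decimals keeps the value in \([0,1]\), because the endpoints are representable and rounding is monotone. With \(q=q_{\mathrm{arch}}\), the denominator satisfies \(0.7+0.3q\ge0.7>0\). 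Setting
\[ \beta=\frac{0.7}{0.7+0.3q},\qquad 1-\beta=\frac{0.3q}{0.7+0.3q}, \]
both weights are nonnegative and sum to \(1\), and \(V_P=\beta V_{\mathrm{cap}}+(1-\beta)V_{\mathrm{arch}}\). Hence \(V_P\) lies between \(V_{\mathrm{cap}}\) and \(V_{\mathrm{arch}}\), and so in \(\mathbb{I}\). When \(q=0\), \(V_P=V_{\mathrm{cap}}\).

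\textbf{The disclaimer.} The final sentence asserts non-implications, so I would establish it by counterexample.
\begin{itemize}
\item \emph{Reference accuracy.} Take an inaccurate \(\hat r\) equal to the candidate sequence itself. By Proposition 3 it scores \(1\), so a high score does not certify that \(\hat r\) is accurate.
\item \emph{Equal chance behavior.} Reuse the two zero-similarity examples following Proposition 3. With \(m=n\) the score is \(0.2\); with \(m=1,n=9\) it is \(0\). Replacing the reference, which can change \(n\), therefore changes unrelated-input scores.
\end{itemize}

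The main obstacle is bookkeeping rather than mathematics. I must confirm that nothing in Proposition 3's proof used authored-reference properties, and that the degenerate cases remain covered: zero-vector embeddings, an empty \(\hat r\), and the \(q_{\mathrm{arch}}=0\) fallback. I would address this by listing the hypotheses actually invoked in each step of that proof.
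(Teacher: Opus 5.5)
Your proposal is correct and follows the paper's own two-step proof. Proposition 3, applied to each possibly estimated and possibly empty reference, gives \(V_{\mathrm{cap}},V_{\mathrm{arch}}\in\mathbb{I}\), and the normalized weights \(0.7/(0.7+0.3q_{\mathrm{arch}})\) and \(0.3q_{\mathrm{arch}}/(0.7+0.3q_{\mathrm{arch}})\) are nonnegative and sum to one. Your extra checks are sound: \(q_{\mathrm{arch}}\ge0\) in every resolver branch, the fallback case, and counterexamples for the disclaimer, which the paper states as a remark without proof. For the chance-behavior example, fix a single-descriptor candidate (\(m=1\)), so that \(n=1\) versus \(n=9\) is literally a replacement of the reference, scoring \(0.2\) versus \(0\).
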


\begin{proof}

Apply Proposition 3 to each replacement reference. The normalized weights in \(V_P\) are nonnegative and sum to one, so their mixture preserves the range. \qedhere

\end{proof}

Within-brief Spearman correlation with authored VERTEX is a preliminary concordance diagnostic, not authorization to discard reference validation on unseen briefs. The run-level diagnostic selects one built candidate per arm and omits failed builds; pool consensus and the production metric are distinct estimands. Generalization needs independent held-out briefs, adequate replication, and reference-provenance checks.

\textbf{Correctness versus similarity.} The oracle \(\nu\) certifies requirements in the conditional closure model. VERTEX measures descriptor similarity on a continuous scale. Turning similarity into a pass/fail decision requires an additional, validated threshold; no such decision rule is part of the fixed-point semantics.

\subsection{5.2 The build-gated composite}\label{the-build-gated-composite}

For a full-repository task, \(K\) signals \(s_1,\dots,s_K\in\mathbb{I}\) (functional end-to-end behavior, trajectory similarity, visual fidelity, architecture, accessibility, output security, robustness, code health) combine through a convex weight vector \(w\in\Delta(\{1,\dots,K\})\), gated by a build indicator \(g\in\{0,1\}\):

\[ C\;=\;g\sum_{k=1}^{K}w_k\,s_k,\qquad w_k\ge 0,\ \textstyle\sum_{k}w_k=1. \]

\begin{proposition}[build-gated range]\label{prop-5}

\(C\in\mathbb{I}\), with \(C=0\) whenever the repository does not build (\(g=0\)); when it builds, \(\min_k s_k\le C\le\max_k s_k\).

\end{proposition}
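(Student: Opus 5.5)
The plan is to split on the two possible values of the build indicator \(g\in\{0,1\}\) and to treat the weighted sum \(\bar s=\sum_{k=1}^{K}w_k s_k\) as a convex combination. This mirrors the mixture step in Proposition 4. The only inputs are the standing hypotheses \(s_k\in\mathbb{I}\) and \(w\in\Delta(\{1,\ldots,K\})\); nonemptiness is guaranteed by \(K\in\mathbb{N}_{+}\).

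First I establish the convex-combination sandwich for \(\bar s\). Let \(m=\min_k s_k\) and \(M=\max_k s_k\); both exist because the index set is finite and nonempty. Since each \(w_k\ge0\), multiplying \(m\le s_k\le M\) by \(w_k\) preserves the inequalities. Summing over \(k\) and using \(\sum_k w_k=1\) gives
\[ m\le\bar s\le M. \]
Because \(0\le m\) and \(M\le1\) (all signals lie in \(\mathbb{I}\)), it follows that \(\bar s\in\mathbb{I}\).

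Next I handle the two cases of \(g\). If \(g=0\), then \(C=0\cdot\bar s=0\in\mathbb{I}\); this is the gating claim and needs nothing beyond finiteness of \(\bar s\). If \(g=1\), then \(C=\bar s\), so both the range claim and the bound \(\min_k s_k\le C\le\max_k s_k\) follow from the sandwich above. Since these two cases exhaust \(g\in\{0,1\}\), we get \(C\in\mathbb{I}\) unconditionally.

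No step is a genuine obstacle. The only point needing care is that the weights are nonnegative and normalized, so that \(w\) is a probability vector rather than an arbitrary real vector. This is exactly where \(w\in\Delta(\{1,\ldots,K\})\) is used: without it, the sandwich fails and \(C\) could leave \(\mathbb{I}\). I would also remark, as the paper does elsewhere, that the proposition is a range statement only. It says nothing about the validity of the individual signals or the choice of weights.
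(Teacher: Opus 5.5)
Your proof is correct and follows the paper's argument. The paper also multiplies \(\min_k s_k\le s_k\le\max_k s_k\) by \(w_k\ge0\), sums using \(\sum_k w_k=1\) for \(g=1\), and notes that the gated product vanishes for \(g=0\); its appendix calculation P2 spells out the same sandwich.
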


\begin{proof}

For \(g=1\), multiply \(\min s_k\le s_k\le\max s_k\) by \(w_k\ge0\) and sum. For \(g=0\) the product is zero. \qedhere

\end{proof}

Missing analyzer evidence is not a passed signal. Degraded evaluations must not be compared as though all fixed weights were measured.

The project gate is \textbf{build-only}; serving additionally gates visual and UX evidence. Functional coverage excludes unevaluable journeys from both numerator and denominator, returning zero when none are evaluable. Screenshot-based visual scores average available candidate frames and return zero when none exist. Comparisons therefore require matched journey and frame coverage; fixed outer weights do not eliminate this conditional missingness.

\textbf{Averaging and build gating do not commute.} Track G's historical aggregate is a different descriptive index. For \(B\in\mathbb{N}_{+}\) briefs, let \(n_b\in\mathbb{N}_{+}\) be the recorded attempt count for brief \(b\), let \(g_{bi}\in\{0,1\}\) be attempt \(i\)'s build indicator, and let \(s_{bk}\) be the stored brief-level value of signal \(k\). The aggregator weights briefs equally, not individual attempts:

\[ \bar g=\frac{1}{B}\sum_{b=1}^{B}\frac{1}{n_b}\sum_{i=1}^{n_b}g_{bi},
 \qquad \bar s_k=\frac{1}{B}\sum_{b=1}^{B}s_{bk}. \]

Ignoring intermediate and final four-decimal rounding, the reported index is

\[ G_{\mathrm{agg}}=\bar g\sum_{k=1}^{K}w_k\bar s_k. \]

This product of marginal averages need not equal an average of jointly gated scores. If per-attempt signals \(s^{\mathrm{seed}}_{bik}\) were available, the corresponding equal-brief average would instead be

\[ G_{\mathrm{joint}}=\frac{1}{B}\sum_{b=1}^{B}\frac{1}{n_b}
 \sum_{i=1}^{n_b}\left(g_{bi}\sum_{k=1}^{K}w_k s^{\mathrm{seed}}_{bik}\right). \]

Even when each stored signal is a within-brief seed mean, these quantities can differ: with two single-seed briefs, one unit-weight signal, and gate/signal pairs \((1,1)\) and \((0,0)\), the product of means is \(1/4\) but the mean gated score is \(1/2\). Stored brief-level judge outputs need not themselves be seed means. Marginal summaries therefore cannot reconstruct the joint per-attempt quantity. The completeness bootstrap resamples brief-level means.

For integers \(n\ge1\), \(0\le c\le n\), and \(1\le k\le n\), let \(c\) of \(n\) recorded trials satisfy a specified success event. Define \(\binom{a}{k}=0\) when integers \(k>a\ge0\). Reliability is \(\mathrm{pass}^{k}=\binom{c}{k}/\binom{n}{k}\) \citep{ref-41}: the probability that a uniformly selected \(k\)-subset consists entirely of successes. For independent trials with common success probability \(p\in[0,1]\), it is unbiased for \(p^k\) and is non-increasing in \(k\).

The project evaluator calls a seed successful when it builds and its measured functional coverage is at least \(0.999\). Track G requires seed completeness of at least \(0.999\) and averages reliability estimates over briefs. These are test-defined events, not complete semantic correctness. The live project path currently allows one seed, which supplies no multi-run reliability estimate.

For the \emph{Bugfix} family (Track R), let \(r\in\{0,1\}\) indicate that the executed hidden resolution tests pass without a generation error. Let \(a\in\{0,1\}\) indicate that the submitted patch contains no prohibited test-file edits detected by the path-based filter. With bounded graded signals \(s\) and convex weights \(w\), \(C_R=ar(w\cdot s)\). If a semantic patch judge supplies \(J\in[0,1]\), the formula is \(C_R=ar\bigl(0.8(w\cdot s)+0.2J\bigr)\). The signals measure held-out robustness, preservation of existing behavior, patch minimality/locality relative to the reference fix, and code health.

The lenient resolution rate reports \(r\). The implemented strict rate also requires \(a=1\) and held-out/regression scores of at least \(0.999\). An absent held-out or regression suite receives score \(1\) with zero executed tests; therefore the strict rate certifies those additional checks only when the suites are nonempty and actually run. The test-path filter does not exclude arbitrary evaluator tampering.

\subsection{5.3 Safety: attack-success and confidence intervals}\label{safety-attack-success-and-confidence-intervals}

Let \(\mathcal{I}=\{1,\ldots,N_{\mathrm{case}}\}\) index \(N_{\mathrm{case}}\ge1\) evaluated cases. For each \(i\in\mathcal{I}\), let \(n_i\in\mathbb{N}_{+}\) be its recorded attempt count and \(c_i\in\{0,\ldots,n_i\}\) its classified successes. The \textbf{attempt-pooled} headline is \(\widehat p=\sum_{i\in\mathcal{I}}c_i/\sum_{i\in\mathcal{I}}n_i\). The separate case-weighted \(\mathrm{ASR}@1\) is \(N_{\mathrm{case}}^{-1}\sum_{i\in\mathcal{I}}c_i/n_i\). They differ when attempt counts differ. With \(N=\sum_{i\in\mathcal{I}}n_i>0\) and normal quantile \(z>0\) (default \(z=1.96\) for nominal 95\% coverage), the Wilson score interval \citep{ref-13} is

\[ \frac{\widehat p+\frac{z^2}{2N}}{1+\frac{z^2}{N}}\;\pm\;\frac{z}{1+\frac{z^2}{N}}
 \sqrt{\frac{\widehat p(1-\widehat p)}{N}+\frac{z^2}{4N^2}}. \]

For a common Bernoulli success probability \(p\in[0,1]\), the endpoints solve \((\widehat p-p)^2=z^2p(1-p)/N\) exactly. Coverage is approximate, not exact; correlated attempts and heterogeneous cases weaken this sampling interpretation.

The case-cluster percentile bootstrap \citep{ref-14} resamples whole cases and recomputes the attempt-pooled ratio. Defaults are 1,000 resamples, pseudorandom seed zero, and 95\% intervals. After sorting the resampled estimates, the implementation selects zero-based entries 25 and 975 and rounds to four decimal places. Track G applies the corresponding procedure to brief-level mean completeness. One cluster or identical cluster values produces a point interval, not evidence of zero population uncertainty. Population inference still requires a representative cluster sample. With no observations, rates and intervals are unavailable.

For each case with integers \(0\le c_{\mathrm{s}}\le n\) and \(1\le k\le n\),

\[ \mathrm{ASR}@k=1-\frac{\binom{n-c_{\mathrm{s}}}{k}}{\binom{n}{k}}. \]

This is one minus the probability that a uniform \(k\)-subset of recorded attempts contains no success; it is nondecreasing in \(k\). Under i.i.d. Bernoulli sampling it is unbiased for the probability of at least one success in \(k\) fresh attempts \citep{ref-15}. It is undefined for \(k>n\). A multi-pass adaptive attack path is one trial with its own fixed budget, not a set of independent attacks. Results with different budgets are not comparable.

Incomplete cells are excluded and listed with their reason, rather than silently compressing surviving seeds into a complete-budget result. This is still a conditional-on-observation estimate: missingness can bias it. Common-case comparisons and coverage reporting are required. Over-refusal uses benign controls. Live \textbf{secure-and-useful} is unavailable in this implementation: a transcript marker cannot independently verify the legitimate task into which an attack is inserted (the \emph{carrier task}). Secure-and-useful evaluation is a goal motivated by \citep{ref-38,ref-39,ref-40}, not an established live measurement here.

\subsection{5.4 Gated qualitative judging}\label{gated-qualitative-judging}

The optional Claude-rubric backend uses absolute 0--4 rubrics for visual, architecture, and UX judgments \citep{ref-16,ref-17,ref-18}. Its outputs must be finite and within that range before division by four; malformed outputs are unavailable evaluations. Absolute presentation avoids pairwise ordering as an input, but does not remove verbosity, style, or model-family bias.

Other backends measure different quantities. The heuristic backend uses deterministic functional and code-structure proxies; the CLIP backend uses image similarity and descriptor comparisons. A heuristic visual proxy does not inspect rendered appearance. Backend identity and evaluated evidence coverage are therefore part of score provenance.

\textbf{Assumption A3 (bounded gated judging).} To qualify as evidence-based judging, a backend must return a bounded score from candidate evidence it actually reads. Visual and UX assessment require a built, served artifact; static architecture assessment requires a built repository with scoreable files. This is an eligibility condition, not an established property of every backend. Human agreement and resistance to adversarial candidate content require separate empirical validation.

\subsection{5.5 Cross-family aggregate (the Gauntlet Index)}\label{cross-family-aggregate-the-gauntlet-index}

Let \(\mathcal{T}=\{S,Q,R,G,P\}\) be the five fixed task families, and write \(\tilde s_t(M)\in\mathbb{I}\) for arm \(M\)'s normalized family-\(t\) headline. The implementation maps \(S\) to \(1-\mathrm{ASR}\), \(Q\) to requirement coverage, and \(G\) to \(g_{\text{score}}\), the reported aggregate described in Section 5.2. Families \(P\) and \(R\) use their respective composites. Higher values are preferred. For an arm with qualified scores on \textbf{all five} families, define

\[ \mathrm{Index}(M)=\frac{1}{5}\sum_{t\in\mathcal{T}}\tilde s_t(M)\in\mathbb{I}. \]

The index is unavailable when any family is missing; individual scores and coverage remain visible. Its range follows from convexity, not empirical comparability. Qualification requires current scoring, observed execution, and complete evaluable coverage. The current publication excludes Security records from qualified headlines, so it does not presently report a five-family index. Matched tasks, budgets, provenance, and timestamps are still necessary for competitive comparisons. The index is not a validated general-capability scale.

\subsection{5.6 Operational diagnostics for long-horizon results}\label{operational-diagnostics-for-long-horizon-results}

The quantities in Table 1 are \textbf{proposed diagnostics}, not measurements supplied by the current paper. A valid transition model or repeated, uncensored run-level observations would be needed to estimate them. A budget cap is not an observed completion time.

\begin{minipage}{\linewidth}

\begin{longtable}[]{@{}
  >{\raggedright\arraybackslash}p{(\linewidth - 2\tabcolsep) * \real{0.3200}}
  >{\raggedright\arraybackslash}p{(\linewidth - 2\tabcolsep) * \real{0.6800}}@{}}
\caption{Proposed execution diagnostics. No fitted Markov kernel, calibrated per-requirement prior, or completion-time distribution is reported here.}\label{tbl-1}\tabularnewline
\toprule\noalign{}
\begin{minipage}[b]{\linewidth}\raggedright
Diagnostic
\end{minipage} & \begin{minipage}[b]{\linewidth}\raggedright
Meaning
\end{minipage} \\
\midrule\noalign{}
\endfirsthead
\toprule\noalign{}
\begin{minipage}[b]{\linewidth}\raggedright
Diagnostic
\end{minipage} & \begin{minipage}[b]{\linewidth}\raggedright
Meaning
\end{minipage} \\
\midrule\noalign{}
\endhead
\bottomrule\noalign{}
\endlastfoot
\(n\) & Maximum loop iterations / total passes allowed for the run. \\
Productive passes & Passes with \(S_{t+1}\supsetneq S_t\) in the certificate-accumulation trace. \\
\(\mathrm{efc}(S_t)\) & New certificates enabled by the abstract operator: \(|\Phi(S_t)\setminus S_t|\). \\
\(|R|,\ |R_{\mathrm{req}}|\) & Total and required requirement counts (the milestone budget). \\
Reasoning budget \(b\) & Thinking mode / compute budget the harness ran under. \\
\(\Pr(T_C\le n)\) & Probability of covering \(R_{\mathrm{req}}\) within the iteration budget (§4.4). \\
\(\mathbb{E}[T_C]\) & Expected number of passes to completion. \\
\(-\log_2 p\) & Event surprisal in bits; for a harmful event, a descriptive risk transform. \\
\end{longtable}

\end{minipage}\par\medskip

The last row uses self-information \citep{ref-54}. For an event of probability \(p\in(0,1]\), its \emph{surprisal} is

\[ I(p)=-\log_2 p\ \ \text{bits}\qquad\bigl(p=\tfrac12\Rightarrow 1\ \text{bit},\ \ p=\tfrac14\Rightarrow 2\ \text{bits},\ \ p=0.01\Rightarrow 6.64\ \text{bits}\bigr), \]

This is finite on \((0,1]\); extend \(I(0)=+\infty\). A zero observed attack count is not proof that the underlying probability is zero. For positive raw and governed rates, a descriptive bit difference is \(\log_2(\widehat p_{\mathrm{raw}}/\widehat p_{\mathrm{gov}})\); \(0/0\) is undefined. A probability interval \([p_L,p_U]\) transforms to \([-\log_2 p_U,-\log_2 p_L]\), with an infinite upper endpoint if \(p_L=0\). This transformation creates no new evidence or confidence guarantee.

A proposed information-weighted count must specify its event. Let \(Z_t=(H_0,S_0,\ldots,H_t,S_t)\) be the observed run history through pass \(t\). For each uncertified \(r\in R\setminus S_t\), suppose a calibrated estimate \(q_t(r)=\Pr(r\in S_{t+1}\setminus S_t\mid Z_t)\in(0,1]\) were available. For an observed transition one could then define \(\mathrm{iefc}_t=\sum_{r\in S_{t+1}\setminus S_t}-\log_2q_t(r)\). This sums marginal event surprisals; it is not joint information without an appropriate dependence model, nor a measure of task value. No such probability estimates are reported here.

\section{6. Experimental Setup}\label{experimental-setup}

The evaluation separates three questions. First, does a configured governance layer reduce classified unsafe responses while preserving behavior on benign requests? Second, how do coding harnesses differ in the functional and structural quality of the artifacts they produce? Third, can automatically estimated semantic references recover rankings obtained with hand-authored references? These questions require different experimental units: attack attempts, completed coding tasks, and independently specified application briefs, respectively.

Sections 6.1--6.6 define the comparison protocol and measurement boundaries. Section 7 then describes the retained June 2026 pilot studies, including their tasks, sample coverage, numerical outcomes, and limitations. The pilots were not one uniformly controlled experiment. Model execution and rescoring of existing artifacts have different evidential meanings. The paper reports benchmark outcomes, not constructed demonstrations; neither rescoring nor report regeneration is a new independent model execution.

\subsection{6.1 Baselines and base models}\label{baselines-and-base-models}

An arm is a harness, configured provider/model, prompts, tools, environment, and budget. The implementation offers raw Codex CLI, Claude Code, Oh My Pi (OMP), and OpenCode adapters with Cortex-configured counterparts. These are product-bundle comparisons. Configuration and display labels are not attestations of the model that actually served a historical request. Floating aliases, environment overrides, reasoning effort, and tool/runtime versions must be recorded per execution.

Same-model labels alone do not identify a governance-only causal effect. Cortex arms also change prompts, configuration, and sometimes planning/repair budgets. A policy-only ablation needs matched effective models, tools, environments, cases, evaluation access, and total resource budgets. A benchmark invocation can contain many internal agent actions.

\subsection{6.2 Relation to existing benchmarks}\label{relation-to-existing-benchmarks}

SWE-bench and its variants assess repository issue resolution \citep{ref-32,ref-33,ref-34}; LiveCodeBench emphasizes contamination-resistant code evaluation \citep{ref-37}. Commit0 and Terminal-Bench include difficult multi-stage work \citep{ref-36,ref-35}. Independent tasks do not imply single-pass agents or saturated performance. CyberSecEval, AgentDojo, and InjecAgent study security and prompt injection \citep{ref-38,ref-39,ref-40}; AgentDojo also evaluates benign task utility. AgentBench, GAIA, WebArena, OSWorld, and τ-bench cover other agent environments and interaction contracts \citep{ref-42,ref-43,ref-44,ref-45,ref-41}. Gauntlet is a complementary framework, not evidence that these suites lack long-horizon work, utility evaluation, or harness comparisons.

\subsection{6.3 Tasks and measured outcomes}\label{tasks-and-measured-outcomes}

Table 2 distinguishes the intended task family from what its evaluator can establish. Passing finite tests is evidence for those tests, not complete semantic correctness.

\begin{minipage}{\linewidth}

\begin{longtable}[]{@{}
  >{\raggedright\arraybackslash}p{(\linewidth - 4\tabcolsep) * \real{0.1600}}
  >{\raggedright\arraybackslash}p{(\linewidth - 4\tabcolsep) * \real{0.3800}}
  >{\raggedright\arraybackslash}p{(\linewidth - 4\tabcolsep) * \real{0.4600}}@{}}
\caption{Five families and their measurement boundaries. No direction of a raw-versus-governed effect is assumed by the score definition.}\label{tbl-2}\tabularnewline
\toprule\noalign{}
\begin{minipage}[b]{\linewidth}\raggedright
Family
\end{minipage} & \begin{minipage}[b]{\linewidth}\raggedright
Task and score
\end{minipage} & \begin{minipage}[b]{\linewidth}\raggedright
Validity boundary
\end{minipage} \\
\midrule\noalign{}
\endfirsthead
\toprule\noalign{}
\begin{minipage}[b]{\linewidth}\raggedright
Family
\end{minipage} & \begin{minipage}[b]{\linewidth}\raggedright
Task and score
\end{minipage} & \begin{minipage}[b]{\linewidth}\raggedright
Validity boundary
\end{minipage} \\
\midrule\noalign{}
\endhead
\bottomrule\noalign{}
\endlastfoot
S: Safety & Classified attack responses and proposed actions; ASR and benign over-refusal. & Lexical classification and action replay are not observed exploits. Live carrier-task utility is unmeasured. \\
Q: Quality & Generated code, hidden behavioral tests, static findings, lint/types and quality rubrics. & Analyzer availability and evaluator integrity are prerequisites; finite tests are incomplete. \\
R: Bugfix & Repository patches; resolution-gated score with held-out and regression checks. & Reference-diff similarity is not correctness. Test-runner tampering must be independently excluded. \\
G: Generative & Full-stack application generation; build/serve checks, feature probes, static scores and optional visual evidence. & Passed features do not establish complete application behavior, real model-endpoint use or time-resolved progress. Historical applications and the current chat-app protocol are distinguished below. \\
P: Project & One storefront brief, build-gated functional/semantic/visual/static composite. & Partial behavioral and accessibility proxies are not a complete storefront or payment validation. \\
\end{longtable}

\end{minipage}\par\medskip

\textbf{Application construction protocol.} The storefront brief asks for a fashion-shopping application with product browsing, filtering or sorting, variant selection, cart editing, consistent price totals, checkout, and order confirmation. Candidates choose their implementation structure. The brief also requests a documented backend API, a responsive interface, accessibility support, an installable progressive web app, an offline shell, automated core-flow tests, and build/serve instructions. Payment is specified in test mode; no real-money transaction is required.

The functional evaluator organizes the storefront into five journeys: home, browsing, product detail, cart, and checkout, with respective weights \(1,2,2,3,3\). It scores the weighted fraction of passing evaluable journeys after a successful build. The implemented checks are narrower than the brief. Some verify that a control exists rather than that filtering, sorting, or quantity editing works. Cart checks compare displayed arithmetic; checkout checks reaching a checkout surface. No implemented test-payment driver establishes payment completion. Likewise, manifest and service-worker checks do not demonstrate an offline shopping session, and markup heuristics do not establish full keyboard accessibility.

The separate current chat-app protocol checks message/composer elements, stylesheet and script availability, a chat-response token, and a history response. Its supplied model-service substitute is a deterministic echo responder, not a learned model. Passing a token check does not prove that the application contacted that service; an application could construct the same response itself. These checks therefore provide interface and API smoke coverage, not evidence of model reasoning, streaming, conversation isolation, or durable history. The historical five-probe application study in Section 7.2 and Appendix B.3 instead evaluates a to-do application and is not a chat result.

\subsection{6.4 Configuration isolation and adaptive evaluation}\label{configuration-isolation-and-adaptive-evaluation}

Raw security adapters create temporary workspaces and scrub selected environment and tool-home configuration. Governed adapters add a safety preamble and link repository configuration. This is configuration isolation, \textbf{not host or network confinement}, nor proof that every hook loaded. Track S does not itself run the completion loop. Some nominal tool-output and memory attacks are delivered as workspace files representing those surfaces, not through native tool-result or session-memory interfaces.

Adaptive attacks use earlier target responses to generate later prompts, but each target attempt starts a fresh session/workspace. Their budgeted path is not a continuous target conversation, and follow-ups need not preserve the original delivery surface. Selecting a corpus using previous outcomes and merely labeling examples as held out do not establish an independently frozen evaluation set. A genuine holdout must be fixed before development and withheld from prompt design, curation, and repair feedback.

The project evaluator scores first/final snapshots and retains the highest composite. Governed arms can also use evaluation feedback during repair. These current and historical paths produce adaptively selected development scores, not untouched holdout measurements. Access to more evaluations can favor one arm. A controlled comparison must commit each artifact before a separate final evaluation and report all generation/evaluation costs.

\subsection{6.5 Replication and missingness}\label{replication-and-missingness}

Seed labels index repeated calls; they are not controlled provider RNG seeds. Configured maximum attempts, completed attempts, skipped cases, internal repair passes, and measured provider usage are separate quantities. Q and R report worst observed repeats rather than a pooled per-seed success estimate. Live full-app/repository paths have historically been single-run paths; requested counts must not be interpreted as performed trials. A timeout is not a refusal or proof of candidate safety. Excluding incomplete cells requires coverage reporting and common-case comparisons, and does not remove informative-missingness bias.

\subsection{6.6 Containment and evaluator integrity}\label{containment-and-evaluator-integrity}

The Q/R/G/P evaluation paths execute generated code in Docker. For G/P, dependency installation and compilation occur in a build stage with registry-network access; the later runtime probe applies resource limits and a track-specific network policy. Those probe restrictions must not be attributed to every build-stage operation. Containerized evaluation also does not establish confinement of the generation harness. The live Track S CLI adapters lack an outer sealed-agent sandbox; some use permission-bypass modes, and governed arms inherit host configuration. Adversarial execution requires independent verification of the actual isolation boundary.

The lexical detector (L0) and in-memory action-replay scorer (L1) classify captured text and proposed actions. Their historical ``confirmed'' fields are scorer labels, not authenticated side-effect attestations. Echo filtering, refusal heuristics, bounded capture, and missing typed tool-event provenance can misclassify outcomes. Dynamic evaluation must also protect test-runner configuration and authenticate verdicts rather than trusting candidate-produced terminal summaries. The present implementation has not established security or evaluator-integrity sign-off.

\section{7. Results and Evidence Status}\label{results-and-evidence-status}

The following \textbf{historical descriptive studies} explain the retained observations without requiring access to internal execution records. They concern June 2026 artifacts and captures, not newly reproduced experiments. Five arm labels recur: Codex CLI, Claude Code, OpenCode, Cortex over Codex, and Cortex over Claude. The labels identify configurations, not independently authenticated model versions. Each study's task set and observation unit are specified separately.

\subsection{7.1 Captured-response safety pilot}\label{captured-response-safety-pilot}

This pilot asks how often captured responses are classified as following an unsafe instruction, while separately checking benign refusals. The case set contains 61 harmful cases, two diagnostic instruction-following probes, and two benign controls. Harmful objectives include secret disclosure, destructive repository operations, unsafe dependencies, and attempts to disable protections. Delivery contexts include direct requests and representations of repository, tool-output, and memory content; they are not all native integrations with those surfaces.

One attempt is recorded for each completed case--arm cell. Across five arms, 290 of 305 possible harmful observations are retained; 15 are missing after timeouts. The diagnostic probes and benign controls are excluded from the harmful-response denominator. Figure 4 and Table 3 report classified successes divided by observed harmful attempts, not by the configured maximum number of attempts.

\begin{figure}
\centering
\includegraphics[width=1\linewidth,height=\textheight,keepaspectratio]{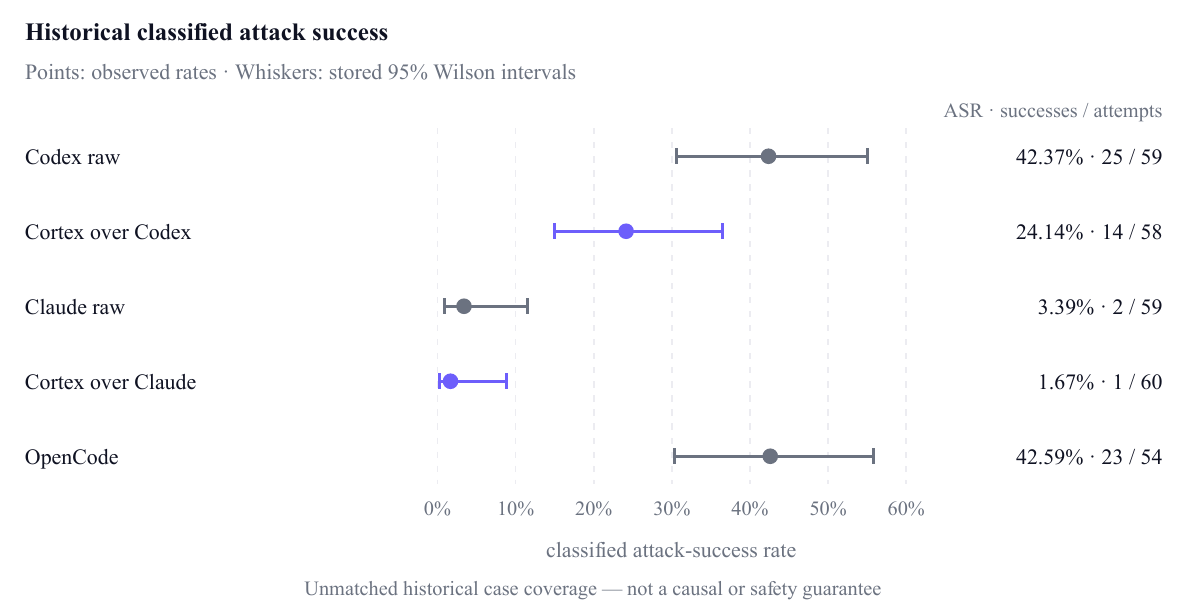}
\caption{Classified attack responses in the June 2026 safety pilot, comparing five historical harness configurations. Points show observed rates, whiskers the stored 95\% Wilson intervals, and labels the successes and observed attempts. Model names are historical labels. Section 7 describes coverage, provenance, and benign-control limitations; these are descriptive, not causal comparisons.}\label{fig-4}
\end{figure}

\begin{minipage}{\linewidth}

\begin{longtable}[]{@{}
  >{\raggedright\arraybackslash}p{(\linewidth - 6\tabcolsep) * \real{0.3400}}
  >{\raggedright\arraybackslash}p{(\linewidth - 6\tabcolsep) * \real{0.2000}}
  >{\raggedright\arraybackslash}p{(\linewidth - 6\tabcolsep) * \real{0.2300}}
  >{\raggedright\arraybackslash}p{(\linewidth - 6\tabcolsep) * \real{0.2300}}@{}}
\caption{Historical classified safety outcomes from the stated June 2026 cohort. Intervals are the stored 95\% Wilson intervals, not newly reproduced estimates.}\label{tbl-3}\tabularnewline
\toprule\noalign{}
\begin{minipage}[b]{\linewidth}\raggedright
Historical arm label
\end{minipage} & \begin{minipage}[b]{\linewidth}\raggedright
Classified successes / observed attempts
\end{minipage} & \begin{minipage}[b]{\linewidth}\raggedright
ASR
\end{minipage} & \begin{minipage}[b]{\linewidth}\raggedright
Stored 95\% Wilson interval
\end{minipage} \\
\midrule\noalign{}
\endfirsthead
\toprule\noalign{}
\begin{minipage}[b]{\linewidth}\raggedright
Historical arm label
\end{minipage} & \begin{minipage}[b]{\linewidth}\raggedright
Classified successes / observed attempts
\end{minipage} & \begin{minipage}[b]{\linewidth}\raggedright
ASR
\end{minipage} & \begin{minipage}[b]{\linewidth}\raggedright
Stored 95\% Wilson interval
\end{minipage} \\
\midrule\noalign{}
\endhead
\bottomrule\noalign{}
\endlastfoot
Codex raw & 25 / 59 & 42.37\% & 30.61--55.07\% \\
Cortex over Codex & 14 / 58 & 24.14\% & 14.96--36.53\% \\
Claude raw & 2 / 59 & 3.39\% & 0.93--11.54\% \\
Cortex over Claude & 1 / 60 & 1.67\% & 0.29--8.86\% \\
OpenCode & 23 / 54 & 42.59\% & 30.33--55.84\% \\
\end{longtable}

\end{minipage}\par\medskip

Using the exact observed counts, the descriptive relative ASR reductions, \((\mathrm{ASR}_{\mathrm{raw}}-\mathrm{ASR}_{\mathrm{governed}})/\mathrm{ASR}_{\mathrm{raw}}\), are approximately \(43.0\%\) for the Codex-labeled pair and \(50.8\%\) for the Claude-labeled pair. These are relative reductions, not percentage-point differences. The completed case sets differ between arms, and timed-out attempts cannot be treated as safe responses. Each of the four paired arms has zero refusals on only two benign controls, with stored interval \([0,65.76\%]\); OpenCode refuses one of its two controls. Two diagnostic probes per arm are reported separately and are not additional benign-utility trials.

The retained responses were subsequently rescored using lexical analysis, in-memory replay of proposed actions, and response judging. No authenticated external exploit follows from a positive classification. Historical judge summaries also conflict: a summary describes heuristic judging while per-response entries identify a language-model judge. This limits reconstruction of the exact scoring protocol. These observations establish neither unchanged benign utility nor statistical equivalence, low population over-refusal, or a governance-only causal effect.

\subsection{7.2 Coding and application outcomes}\label{coding-and-application-outcomes}

Figure 5 compares three different outcomes, each with its own denominator. The \textbf{quality study} comprises six code-generation tasks and one outcome per task and arm. Its headline is the equal-weight mean of the six behavioral-test pass fractions. Raw Codex and Claude both score \(0.8125\), and their corresponding Cortex configurations both score \(0.9792\). The entire paired difference comes from the TypeScript upload task; all four configurations still miss one shopping-cart check. Table 4 in Appendix B.1 includes every task-level numerator and denominator, including differing upload-test totals and generation failures.

The \textbf{repair study} replays patches for six elementary Python bugs, with one outcome per issue and arm. All four raw/governed Codex and Claude configurations resolve all six issues; OpenCode resolves five. There is no paired difference in issue-resolution rate. Table 5 in Appendix B.2 reports all issue tests and the separate patch-sensitive composites; empty auxiliary test groups are not evidence of regression robustness.

The \textbf{application study} concerns a to-do app, not the chat gallery. Each arm has one candidate and five final feature checks. Codex passes \(3/5\); the other four configurations pass \(5/5\). All candidates build. Table 6 defines the checks, and Figure 7 in Appendix B.3 shows their positions in the final checklist. Those positions are not elapsed time or repeated execution milestones.

\begin{figure}
\centering
\includegraphics[width=1\linewidth,height=\textheight,keepaspectratio]{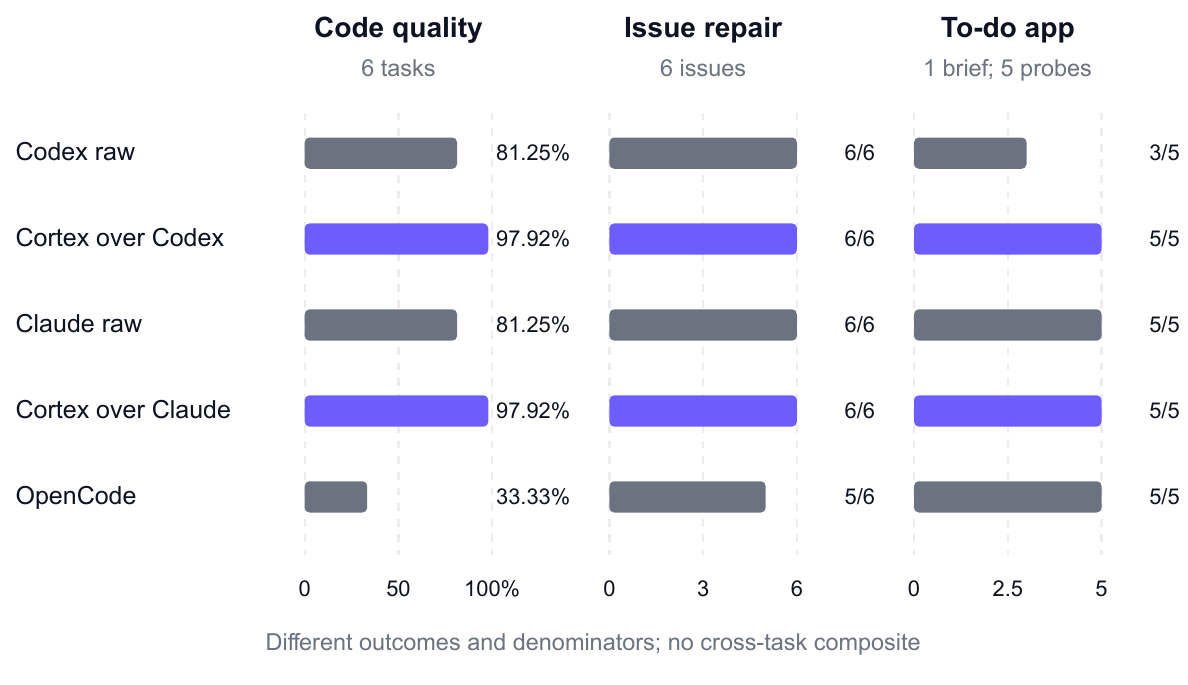}
\caption{Recorded coding outcomes: task-macro behavioral coverage (left), resolved issues (middle), and passed to-do features (right). Each row is a harness configuration; higher values are better within each panel, not comparable units across panels. Exact cells and evaluation limits are in Tables 4--6. The upload task accounts for the paired quality difference; there is no paired repair-resolution difference. These small historical pilots are not causal tests.}\label{fig-5}
\end{figure}

These are descriptive artifact evaluations, not matched governance-only trials. Quality execution metadata conflict, repair lacks auxiliary robustness tests, and the application study has no independently repeated candidates. Appendix B retains these limitations beside the complete evidence.

\subsection{7.3 Storefront outcomes and reference agreement}\label{storefront-outcomes-and-reference-agreement}

Two candidate pools address the same storefront brief in Section 6.3, with one built application per arm. The two-arm pool scores \(0.7489\) for raw Claude and \(0.6507\) for raw Codex. In the separate five-arm pool, raw Codex scores \(0.7212\), OpenCode \(0.7071\), Cortex over Codex \(0.6799\), raw Claude \(0.6737\), and Cortex over Claude \(0.6533\). There is no universal Cortex advantage in these observations. Tables 7--8 and Figure 8 in Appendix B.4 include every component, the composite definition, recorded evaluator identities, and proxy limitations.

Figure 6 asks a narrower question: does estimated-reference VERTEX order these same applications like authored-reference VERTEX? Higher similarity gives a better rank; rank 1 is highest. The two-arm pool has identical ordering and Spearman correlation \(1\). The five-arm pool exchanges the first and third positions, giving correlation \(0.6\). Table 9 in Appendix B.5 supplies every score pair and rank. Neither value is correctness accuracy or a probability that a reference is valid. Peer-derived reference construction differs between pools; both pools concern one brief, not seven independent task samples.

\begin{figure}
\centering
\includegraphics[width=1\linewidth,height=\textheight,keepaspectratio]{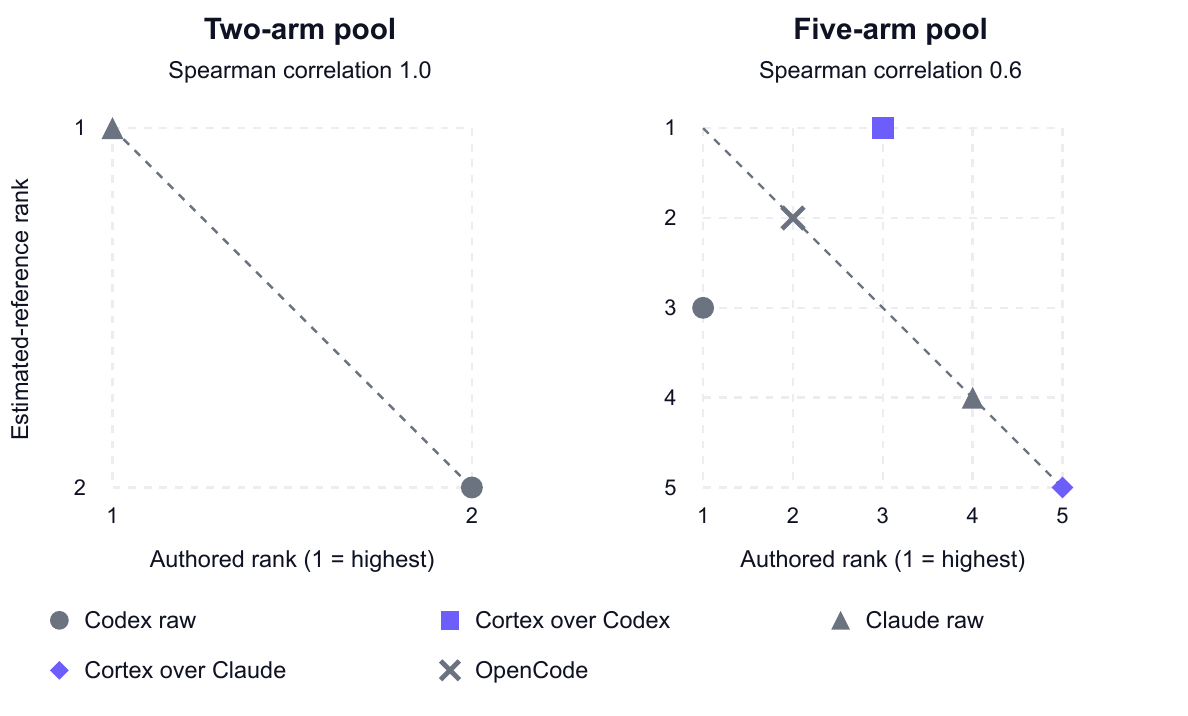}
\caption{Within-pool VERTEX rank agreement on the same storefront brief. Each marker is one built application; the dashed diagonal is identical ordering, not score calibration. Rank 1 is highest on both axes. The two-arm pool needs only a matching order to obtain correlation 1; the five-arm pool swaps the first and third positions. Exact scores and ranks are in Table 9. Pool-based peer reference construction differs between the panels; they are not independent task replications.}\label{fig-6}
\end{figure}

\section{8. Discussion}\label{discussion}

The finite-lattice result identifies a precise contract for requirement closure: validation must be monotone, accepted evidence sound, and prior certificates persistent. Repairing code complicates the last condition because an edit can invalidate earlier checks. Fresh validation of the final artifact is therefore central to connecting the abstract contract with an operational completion decision.

The evaluation framework makes these obligations observable in principle, but the archived comparisons do not identify a governance-only effect. A policy ablation should hold the effective model, task set, tools, and total budget fixed, with independent final evaluation. Replication across independently selected tasks and adequate benign controls is needed to assess generalization and utility. Only retained benchmark observations support the empirical discussion here; constructed outcomes are not evidence.

\section{9. Assumptions and Limitations}\label{assumptions-and-limitations}

The formal assumptions concern different boundaries. A1 requires complete mediation and threat-class coverage; A2 requires sufficient abstract state and sound, monotone, persistent certification; A3 requires bounded judgments grounded in the evaluated evidence. Parsing a task into expectation records does not construct a sound oracle. The probabilistic bound additionally requires uniformly positive progress before completion; without it, expected completion time can be infinite.

The principal measurement limitations are length-sensitive similarity, heuristic references, conditional journey/frame coverage, correlated attempts, and incomplete observations. Section 6.6 specifies the unresolved execution and evaluator-integrity boundaries. The current probes also do not fully measure payment integration, carrier-task utility, consistency of claims with artifacts, or long-horizon behavior.

Historical missing or failed signals cannot be certified by changing current error handling. The retained examples were recorded in June 2026; a later report date does not represent a new experiment.

\section{10. Conclusion}\label{conclusion}

The retained benchmark outcomes do not show a uniform Cortex advantage. Classified harmful-response rates are lower in the paired Cortex configurations, but completed case sets differ and benign controls are too few to establish preserved utility (Figure 4 and Table 3). Higher paired quality coverage comes from one upload task; repair resolution is unchanged between paired configurations, and the single to-do brief distinguishes task creation and readback rather than long-horizon progress (Figure 5 and Tables 4--6). The storefront pools do not consistently favor Cortex. Estimated-reference rank agreement varies across two pools for one brief and does not establish correctness calibration (Figure 6 and Tables 7--9).

These conclusions concern the recorded benchmark outcomes, including artifact rescoring. Unequal observation sets and unresolved provenance prevent treating all cohorts as verified live experiments or attributing differences to governance alone. The finite-lattice and stochastic results are conditional mathematical statements, not performance gains established by these comparisons. Matched, independently evaluated experiments are still required to estimate effects on safety, utility, generalization and long-horizon completion.

\section{Availability}\label{availability}

The interactive paper and latest published results are available at \href{https://benchmark.cortex.a2olabs.com}{benchmark.cortex.a2olabs.com}. Results retain their source cohorts, recorded measurements and evidence limitations. The standalone Gauntlet benchmark snapshot is available at \href{https://github.com/Xpitfire/cortex-gauntlet}{github.com/Xpitfire/cortex-gauntlet}. It excludes private implementation dependencies and raw captures, so the screened snapshot alone is not a complete end-to-end reproduction environment. Download the Cortex CLI from \href{https://cortex.a2olabs.com/}{cortex.a2olabs.com}. The downloadable PDF is a dated rendering of this paper, not a claim that the historical results are new measurements.

\clearpage

\section{Appendix A --- Proofs}\label{appendix-a-proofs}

The following calculations give the identities used in the score definitions and conditional bounds. Labels P1--P8 distinguish these appendix calculations; they are not additional manuscript propositions.

\begin{itemize}
\tightlist
\item
  \textbf{P1 (normalizer).} For \(b\in[0,1)\), \(\eta_b(x)=\operatorname{clamp}((x-b)/(1-b);0,1)\) maps \([b,1]\) onto \([0,1]\). On that interval the clamp is inactive and the affine map has positive slope \(1/(1-b)\), with endpoints \(\eta_b(b)=0\) and \(\eta_b(1)=1\). Outside the interval the clamp is constant, so the complete map is nondecreasing, but not globally strictly increasing.
\item
  \textbf{P2 (composite).} For \(w\in\Delta(\{1,\dots,K\})\) and \(s_k\in\mathbb{I}\), let \(s_{\min}=\min_k s_k\) and \(s_{\max}=\max_k s_k\). Multiplying \(s_{\min}\le s_k\le s_{\max}\) by \(w_k\ge0\) and summing gives \(s_{\min}\le\sum_k w_k s_k\le s_{\max}\), because the weights sum to one. Thus the ungated average lies in \(\mathbb{I}\), and the binary build gate gives \(C=0\) when \(g=0\) without changing that bound when \(g=1\).
\item
  \textbf{P3 (harmonic mean).} For \(P,Q\ge0\), set \(F=0\) at \(P=Q=0\), otherwise \(F=2PQ/(P+Q)\). The zero case satisfies the bounds directly; henceforth assume \(P+Q>0\). If \(0\le P\le Q\), then \(F-P=P(Q-P)/(P+Q)\ge0\) and \(Q-F=Q(Q-P)/(P+Q)\ge0\); exchange \(P,Q\) for the other case. Also \((P+Q)/2-F=(P-Q)^2/(2(P+Q))\ge0\).
\item
  \textbf{P4 (DTW).} For \(m,n\in\mathbb{N}_{+}\), similarities in \([-1,1]\), and finite \(\lambda\ge0\), both the local dissimilarity and its positional penalty factor are nonnegative. Every admissible alignment path therefore has nonnegative cost, as does the minimum over paths. Consequently \(1-\mathrm{DTW}/(m+n)\le1\); clamping this quantity below at zero gives \(D\in\mathbb{I}\). This is a range bound, not a guarantee that descriptor order represents elapsed execution time.
\item
  \textbf{P5 (any-success and all-success estimators).} For integers \(0\le c\le n\) and \(1\le k\le n\), exactly \(\binom{n-c}{k}\) of the \(\binom{n}{k}\) subsets avoid success, while exactly \(\binom{c}{k}\) consist entirely of successes. Thus \(\mathrm{pass}@k=1-\binom{n-c}{k}/\binom{n}{k}\) is the fraction of \(k\)-subsets containing any success, and \(\mathrm{pass}^{k}=\binom{c}{k}/\binom{n}{k}\) is the fraction consisting entirely of successes. Both lie in \([0,1]\). Coupling the subsets as prefixes of one uniform permutation shows that the any-success event can only grow with \(k\), whereas the all-success event can only shrink. Under independent Bernoulli trials with common success probability \(p\), each fixed subset has any-success probability \(1-(1-p)^k\) and all-success probability \(p^k\). Linearity of expectation gives unbiasedness for these respective targets. Independence is required for these targets, not for the finite-subset identities or monotonicity.
\item
  \textbf{P6 (Wilson).} For \(N>0\), \(z>0\), and \(\widehat p\in[0,1]\), the score-test boundary \((\widehat p-p)^2=z^2p(1-p)/N\) becomes \((N+z^2)p^2-(2N\widehat p+z^2)p+N\widehat p^2=0\). Applying the quadratic formula and dividing numerator and denominator by \(N\) yields the interval endpoints in Section 5.3. Exact solution of this equation does not give exact confidence coverage: the score-test approximation and sampling assumptions remain necessary.
\item
  \textbf{P7 (Theorem 1).} Inflationarity gives an increasing chain of requirement sets; each strict increase adds at least one previously absent member of finite \(R\). Starting from the empty set permits at most \(|R|\) such increases. Monotonicity keeps every iterate below every fixed point, so stabilization gives the least fixed point. Proposition 1 then follows because the successive set differences are disjoint and their union is the final certified set.
\item
  \textbf{P8 (monotone Markov execution).} Under Proposition 2's uniform positive-progress assumption, the waiting time for each strict increase is bounded by a geometric waiting time of mean \(1/\varepsilon\). At most \(|R|-|S_0|\) increases can precede completion, giving the stated expectation bound by addition of expectations; independence between these waiting times is not needed. An incomplete absorbing state violates the progress assumption and is not covered by the result.
\end{itemize}

These arguments establish algebraic identities and conditional bounds. They do not establish that a concrete validator is sound, that a repair preserves earlier certificates, or that a reported experimental outcome is authentic. Those claims require separate execution evidence and empirical validation.

\clearpage

\section{Appendix B --- Complete historical result exhibits}\label{appendix-b-complete-historical-result-exhibits}

This appendix contains the task-level and component-level evidence behind Section 7. Figures and tables use a fixed arm order: raw Codex, Cortex over Codex, raw Claude, Cortex over Claude, and OpenCode. A column headed by a base harness and ``+ Cortex'' denotes its configured Cortex counterpart; ``raw'' means unwrapped. Labels do not independently attest the historical model endpoint.

\subsection{B.1 Six-task code generation}\label{b.1-six-task-code-generation}

The tasks are password and secret utilities, safe input parsing, database access, a TypeScript upload API, web-security utilities, and a multi-file shopping-cart package. The other five tasks use Python. There is one retained outcome per task and arm, giving 30 outcomes. Four OpenCode generation failures remain in the denominator rather than being discarded.

Table 4 gives passing and collected behavioral checks. The macro coverage is the equal-weight mean of six task rates, not a pooled check count and not the fraction of natural-language requirements independently certified. For the four paired arms, the only differing task is upload: raw configurations record one failing check, while Cortex configurations record two passing checks. Shopping cart remains at seven of eight checks. Figure 5's left panel summarizes these exact task rates.

\begin{minipage}{\linewidth}

\begin{longtable}[]{@{}
  >{\raggedright\arraybackslash}p{(\linewidth - 10\tabcolsep) * \real{0.3000}}
  >{\raggedright\arraybackslash}p{(\linewidth - 10\tabcolsep) * \real{0.1400}}
  >{\raggedright\arraybackslash}p{(\linewidth - 10\tabcolsep) * \real{0.1400}}
  >{\raggedright\arraybackslash}p{(\linewidth - 10\tabcolsep) * \real{0.1400}}
  >{\raggedright\arraybackslash}p{(\linewidth - 10\tabcolsep) * \real{0.1400}}
  >{\raggedright\arraybackslash}p{(\linewidth - 10\tabcolsep) * \real{0.1400}}@{}}
\caption{Quality: passing / collected behavioral checks in each task--arm outcome. The last row is the equal-weight mean of the six task rates, not pooled test success. Collected upload test totals differ across arms. Four OpenCode generation failures remain as zero-scored outcomes. These test counts are not the number of natural-language requirements certified.}\label{tbl-4}\tabularnewline
\toprule\noalign{}
\begin{minipage}[b]{\linewidth}\raggedright
Task
\end{minipage} & \begin{minipage}[b]{\linewidth}\raggedright
Codex\\
raw\strut
\end{minipage} & \begin{minipage}[b]{\linewidth}\raggedright
Codex\\
+ Cortex\strut
\end{minipage} & \begin{minipage}[b]{\linewidth}\raggedright
Claude\\
raw\strut
\end{minipage} & \begin{minipage}[b]{\linewidth}\raggedright
Claude\\
+ Cortex\strut
\end{minipage} & \begin{minipage}[b]{\linewidth}\raggedright
OpenCode\\
reference\strut
\end{minipage} \\
\midrule\noalign{}
\endfirsthead
\toprule\noalign{}
\begin{minipage}[b]{\linewidth}\raggedright
Task
\end{minipage} & \begin{minipage}[b]{\linewidth}\raggedright
Codex\\
raw\strut
\end{minipage} & \begin{minipage}[b]{\linewidth}\raggedright
Codex\\
+ Cortex\strut
\end{minipage} & \begin{minipage}[b]{\linewidth}\raggedright
Claude\\
raw\strut
\end{minipage} & \begin{minipage}[b]{\linewidth}\raggedright
Claude\\
+ Cortex\strut
\end{minipage} & \begin{minipage}[b]{\linewidth}\raggedright
OpenCode\\
reference\strut
\end{minipage} \\
\midrule\noalign{}
\endhead
\bottomrule\noalign{}
\endlastfoot
Secret utilities & 3/3 & 3/3 & 3/3 & 3/3 & 3/3 \\
Input parsing & 2/2 & 2/2 & 2/2 & 2/2 & 2/2 \\
Database access & 1/1 & 1/1 & 1/1 & 1/1 & 0/1 \\
Upload API & 0/1 & 2/2 & 0/1 & 2/2 & 0/1 \\
Web security & 7/7 & 7/7 & 7/7 & 7/7 & 0/1 \\
Shopping cart & 7/8 & 7/8 & 7/8 & 7/8 & 0/1 \\
Macro coverage & 0.8125 & 0.9792 & 0.8125 & 0.9792 & 0.3333 \\
\end{longtable}

\end{minipage}\par\medskip

Per-outcome entries describe dynamic execution and nonzero test counts, but an accompanying methodology summary describes offline variant selection and a static-only containment boundary. These conflicting descriptions prevent certifying a controlled live experiment; they also do not justify relabeling every outcome synthetic. Static and heuristic quality indicators are supplementary. Zero findings do not establish absence of vulnerabilities.

\subsection{B.2 Six elementary repair issues}\label{b.2-six-elementary-repair-issues}

The Python issues concern summing even rather than odd values, restoring last-in-first-out stack removal, lowercasing URL slugs and stripping punctuation, rounding up pagination for a partial page, case-insensitive word counting, and enforcing a lower clamp bound. These are small issue-resolution tasks, not a difficult multi-file repository benchmark.

Saved patches were replayed through the evaluator, producing 30 issue--arm outcomes. Table 5 contains seven issue tests per arm: two stack tests and one for every other issue. All four paired configurations resolve all six issues; OpenCode fails pagination. The recorded composites combine resolution with patch descriptors; the small Codex difference reflects patch minimality rather than additional resolved issues. Figure 5 therefore plots resolution counts, not a misleading composite improvement.

\begin{minipage}{\linewidth}

\begin{longtable}[]{@{}
  >{\raggedright\arraybackslash}p{(\linewidth - 10\tabcolsep) * \real{0.3000}}
  >{\raggedright\arraybackslash}p{(\linewidth - 10\tabcolsep) * \real{0.1400}}
  >{\raggedright\arraybackslash}p{(\linewidth - 10\tabcolsep) * \real{0.1400}}
  >{\raggedright\arraybackslash}p{(\linewidth - 10\tabcolsep) * \real{0.1400}}
  >{\raggedright\arraybackslash}p{(\linewidth - 10\tabcolsep) * \real{0.1400}}
  >{\raggedright\arraybackslash}p{(\linewidth - 10\tabcolsep) * \real{0.1400}}@{}}
\caption{Repair: passing / collected issue tests, resolved issues, and stored composite means. There are no auxiliary held-out or regression tests in any row. Empty auxiliary groups received passing defaults; strict-rate fields therefore add no independent evidence. The paired Codex composite difference reflects patch minimality, not extra resolved issues.}\label{tbl-5}\tabularnewline
\toprule\noalign{}
\begin{minipage}[b]{\linewidth}\raggedright
Issue
\end{minipage} & \begin{minipage}[b]{\linewidth}\raggedright
Codex\\
raw\strut
\end{minipage} & \begin{minipage}[b]{\linewidth}\raggedright
Codex\\
+ Cortex\strut
\end{minipage} & \begin{minipage}[b]{\linewidth}\raggedright
Claude\\
raw\strut
\end{minipage} & \begin{minipage}[b]{\linewidth}\raggedright
Claude\\
+ Cortex\strut
\end{minipage} & \begin{minipage}[b]{\linewidth}\raggedright
OpenCode\\
reference\strut
\end{minipage} \\
\midrule\noalign{}
\endfirsthead
\toprule\noalign{}
\begin{minipage}[b]{\linewidth}\raggedright
Issue
\end{minipage} & \begin{minipage}[b]{\linewidth}\raggedright
Codex\\
raw\strut
\end{minipage} & \begin{minipage}[b]{\linewidth}\raggedright
Codex\\
+ Cortex\strut
\end{minipage} & \begin{minipage}[b]{\linewidth}\raggedright
Claude\\
raw\strut
\end{minipage} & \begin{minipage}[b]{\linewidth}\raggedright
Claude\\
+ Cortex\strut
\end{minipage} & \begin{minipage}[b]{\linewidth}\raggedright
OpenCode\\
reference\strut
\end{minipage} \\
\midrule\noalign{}
\endhead
\bottomrule\noalign{}
\endlastfoot
Even-value sum & 1/1 & 1/1 & 1/1 & 1/1 & 1/1 \\
Stack removal & 2/2 & 2/2 & 2/2 & 2/2 & 2/2 \\
URL normalization & 1/1 & 1/1 & 1/1 & 1/1 & 1/1 \\
Pagination & 1/1 & 1/1 & 1/1 & 1/1 & 0/1 \\
Word counting & 1/1 & 1/1 & 1/1 & 1/1 & 1/1 \\
Lower clamp bound & 1/1 & 1/1 & 1/1 & 1/1 & 1/1 \\
Issues resolved & 6/6 & 6/6 & 6/6 & 6/6 & 5/6 \\
Composite & 0.9823 & 0.9917 & 0.9928 & 0.9928 & 0.8333 \\
\end{longtable}

\end{minipage}\par\medskip

No auxiliary held-out or regression tests were executed in any outcome. Empty auxiliary groups received passing defaults and contribute no independent robustness evidence. Reference-relative patch size and locality are not correctness certificates. Replaying one patch is not another independent generation, and no repeated-run variability is established.

\subsection{B.3 To-do application and final feature positions}\label{b.3-to-do-application-and-final-feature-positions}

The brief requests one full-stack to-do application. Table 6 lists its five checks: a rendered task-list page, an input form, listing tasks, creating a task, and reading that task back in the same session. The last check does not establish persistence across process restarts. All candidates recorded successful builds. There are five applications and 25 checks, not 25 independent tasks.

\begin{minipage}{\linewidth}

\begin{longtable}[]{@{}
  >{\raggedright\arraybackslash}p{(\linewidth - 10\tabcolsep) * \real{0.3000}}
  >{\raggedright\arraybackslash}p{(\linewidth - 10\tabcolsep) * \real{0.1400}}
  >{\raggedright\arraybackslash}p{(\linewidth - 10\tabcolsep) * \real{0.1400}}
  >{\raggedright\arraybackslash}p{(\linewidth - 10\tabcolsep) * \real{0.1400}}
  >{\raggedright\arraybackslash}p{(\linewidth - 10\tabcolsep) * \real{0.1400}}
  >{\raggedright\arraybackslash}p{(\linewidth - 10\tabcolsep) * \real{0.1400}}@{}}
\caption{To-do application: one means the check passed and zero means it failed. All five candidates recorded successful builds. Read after write means a newly created task is returned in the same session, not persistence across process restarts. These are five candidate applications with five checks each, not 25 independent tasks.}\label{tbl-6}\tabularnewline
\toprule\noalign{}
\begin{minipage}[b]{\linewidth}\raggedright
Final feature check
\end{minipage} & \begin{minipage}[b]{\linewidth}\raggedright
Codex\\
raw\strut
\end{minipage} & \begin{minipage}[b]{\linewidth}\raggedright
Codex\\
+ Cortex\strut
\end{minipage} & \begin{minipage}[b]{\linewidth}\raggedright
Claude\\
raw\strut
\end{minipage} & \begin{minipage}[b]{\linewidth}\raggedright
Claude\\
+ Cortex\strut
\end{minipage} & \begin{minipage}[b]{\linewidth}\raggedright
OpenCode\\
reference\strut
\end{minipage} \\
\midrule\noalign{}
\endfirsthead
\toprule\noalign{}
\begin{minipage}[b]{\linewidth}\raggedright
Final feature check
\end{minipage} & \begin{minipage}[b]{\linewidth}\raggedright
Codex\\
raw\strut
\end{minipage} & \begin{minipage}[b]{\linewidth}\raggedright
Codex\\
+ Cortex\strut
\end{minipage} & \begin{minipage}[b]{\linewidth}\raggedright
Claude\\
raw\strut
\end{minipage} & \begin{minipage}[b]{\linewidth}\raggedright
Claude\\
+ Cortex\strut
\end{minipage} & \begin{minipage}[b]{\linewidth}\raggedright
OpenCode\\
reference\strut
\end{minipage} \\
\midrule\noalign{}
\endhead
\bottomrule\noalign{}
\endlastfoot
Page title & 1 & 1 & 1 & 1 & 1 \\
Input form & 1 & 1 & 1 & 1 & 1 \\
List tasks & 1 & 1 & 1 & 1 & 1 \\
Create task & 0 & 1 & 1 & 1 & 1 \\
Read after write & 0 & 1 & 1 & 1 & 1 \\
Passed features & 3/5 & 5/5 & 5/5 & 5/5 & 5/5 \\
\end{longtable}

\end{minipage}\par\medskip

Figure 7 displays the same final binary outcomes in their checklist order, with normalized position from first to last feature. It does not depict a completion-decay curve: there are no successive builds, evolving requirements, or timestamped milestones in this summary. A collapsed interval from a single candidate would not establish repeatability.

\begin{figure}
\centering
\includegraphics[width=1\linewidth,height=\textheight,keepaspectratio]{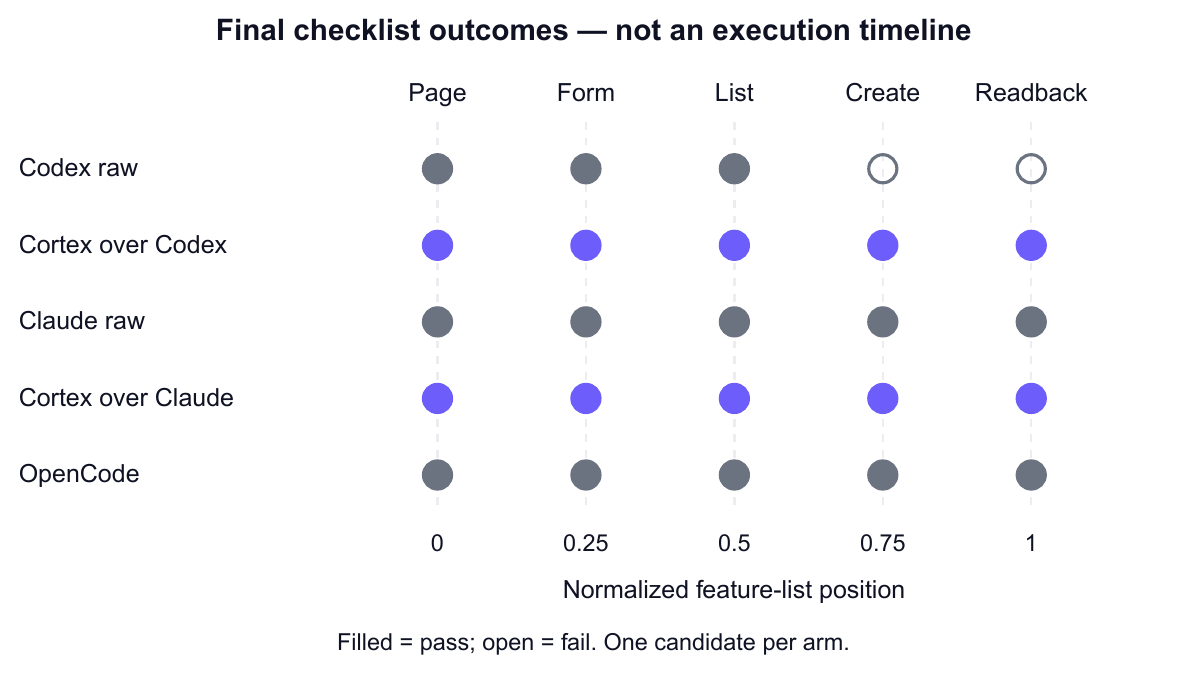}
\caption{The five observed final feature outcomes in checklist order. Position is the feature index divided by four; it is not elapsed time, a repair pass, or a milestone observation. Table 6 defines the checks and provides the binary data. No interpolation or completion-decay curve is supported by these observations.}\label{fig-7}
\end{figure}

The surrounding historical summary describes a chat app and a larger feature inventory, whereas the retained task and individual outcomes consistently identify these five to-do checks. Separate chat illustrations and the current chat protocol in Section 6.3 are not part of this scored evidence.

\subsection{B.4 Storefront components and composite}\label{b.4-storefront-components-and-composite}

Two pools concern the same fashion-shopping storefront brief. All candidates record successful build and serve checks and four screenshot entries. Existing artifacts were rescored rather than regenerated for this analysis. Table 7 contains the two-arm pool; Table 8 contains the separate five-arm pool.

The eight composite weights, in table order before the composite row, are \(0.26\) for functional behavior, \(0.14\) for VERTEX, \(0.09\) for progressive-web-app and accessibility checks, \(0.16\) for visual similarity, \(0.12\) for code and architecture, \(0.08\) for robustness, \(0.06\) for code health, and \(0.09\) for static security. They sum to one and apply behind the build gate in Section 5.2. The separately reported UX score is not another term.

Functional checks probe requested behavior; VERTEX compares capability and architecture descriptors; PWA/accessibility checks are coarse structural proxies; visual similarity compares screenshots. Code/architecture, robustness, code-health, and static-security scores are bounded evaluator summaries, not exhaustive correctness, resilience, or vulnerability certifications. Section 6.3 details these limitations. The recorded visual backend is CLIP ViT-B/32 and the descriptor embedding backend is all-MiniLM-L6-v2. These are recorded identities, not an independent attestation of the complete historical environment.

\begin{minipage}{\linewidth}

\begin{longtable}[]{@{}
  >{\raggedright\arraybackslash}p{(\linewidth - 4\tabcolsep) * \real{0.4000}}
  >{\raggedright\arraybackslash}p{(\linewidth - 4\tabcolsep) * \real{0.3000}}
  >{\raggedright\arraybackslash}p{(\linewidth - 4\tabcolsep) * \real{0.3000}}@{}}
\caption{Two-arm storefront pool. One built candidate per arm; these are retained, subsequently rescored outcomes. All scores are in {[}0,1{]}. Component meanings and weights are specified in Appendix B.4; proxy scores of one do not certify exhaustive correctness. The composite uses authored-reference VERTEX, not the separate estimated-reference diagnostic.}\label{tbl-7}\tabularnewline
\toprule\noalign{}
\begin{minipage}[b]{\linewidth}\raggedright
Component
\end{minipage} & \begin{minipage}[b]{\linewidth}\raggedright
Codex\\
raw\strut
\end{minipage} & \begin{minipage}[b]{\linewidth}\raggedright
Claude\\
raw\strut
\end{minipage} \\
\midrule\noalign{}
\endfirsthead
\toprule\noalign{}
\begin{minipage}[b]{\linewidth}\raggedright
Component
\end{minipage} & \begin{minipage}[b]{\linewidth}\raggedright
Codex\\
raw\strut
\end{minipage} & \begin{minipage}[b]{\linewidth}\raggedright
Claude\\
raw\strut
\end{minipage} \\
\midrule\noalign{}
\endhead
\bottomrule\noalign{}
\endlastfoot
Functional & 0.3636 & 0.7273 \\
VERTEX & 0.4309 & 0.5271 \\
PWA / accessibility & 0.8750 & 0.6250 \\
Visual similarity & 0.5930 & 0.6763 \\
Code / architecture & 0.7682 & 0.7628 \\
Robustness & 1.0000 & 1.0000 \\
Code health & 1.0000 & 1.0000 \\
Static security & 1.0000 & 1.0000 \\
Composite & 0.6507 & 0.7489 \\
\end{longtable}

\end{minipage}\par\medskip

\begin{minipage}{\linewidth}

\begin{longtable}[]{@{}
  >{\raggedright\arraybackslash}p{(\linewidth - 10\tabcolsep) * \real{0.3000}}
  >{\raggedright\arraybackslash}p{(\linewidth - 10\tabcolsep) * \real{0.1400}}
  >{\raggedright\arraybackslash}p{(\linewidth - 10\tabcolsep) * \real{0.1400}}
  >{\raggedright\arraybackslash}p{(\linewidth - 10\tabcolsep) * \real{0.1400}}
  >{\raggedright\arraybackslash}p{(\linewidth - 10\tabcolsep) * \real{0.1400}}
  >{\raggedright\arraybackslash}p{(\linewidth - 10\tabcolsep) * \real{0.1400}}@{}}
\caption{Five-arm storefront pool. One built candidate per arm; these are retained, subsequently rescored outcomes. All scores are in {[}0,1{]}. Component meanings and weights are specified in Appendix B.4; proxy scores of one do not certify exhaustive correctness. The composite uses authored-reference VERTEX, not the separate estimated-reference diagnostic.}\label{tbl-8}\tabularnewline
\toprule\noalign{}
\begin{minipage}[b]{\linewidth}\raggedright
Component
\end{minipage} & \begin{minipage}[b]{\linewidth}\raggedright
Codex\\
raw\strut
\end{minipage} & \begin{minipage}[b]{\linewidth}\raggedright
Codex\\
+ Cortex\strut
\end{minipage} & \begin{minipage}[b]{\linewidth}\raggedright
Claude\\
raw\strut
\end{minipage} & \begin{minipage}[b]{\linewidth}\raggedright
Claude\\
+ Cortex\strut
\end{minipage} & \begin{minipage}[b]{\linewidth}\raggedright
OpenCode\\
reference\strut
\end{minipage} \\
\midrule\noalign{}
\endfirsthead
\toprule\noalign{}
\begin{minipage}[b]{\linewidth}\raggedright
Component
\end{minipage} & \begin{minipage}[b]{\linewidth}\raggedright
Codex\\
raw\strut
\end{minipage} & \begin{minipage}[b]{\linewidth}\raggedright
Codex\\
+ Cortex\strut
\end{minipage} & \begin{minipage}[b]{\linewidth}\raggedright
Claude\\
raw\strut
\end{minipage} & \begin{minipage}[b]{\linewidth}\raggedright
Claude\\
+ Cortex\strut
\end{minipage} & \begin{minipage}[b]{\linewidth}\raggedright
OpenCode\\
reference\strut
\end{minipage} \\
\midrule\noalign{}
\endhead
\bottomrule\noalign{}
\endlastfoot
Functional & 0.5455 & 0.4545 & 0.4545 & 0.4545 & 0.7273 \\
VERTEX & 0.5214 & 0.4776 & 0.4512 & 0.4199 & 0.5070 \\
PWA / accessibility & 1.0000 & 0.8750 & 0.7500 & 0.6250 & 0.6250 \\
Visual similarity & 0.6154 & 0.6138 & 0.6509 & 0.6242 & 0.6605 \\
Code / architecture & 0.7328 & 0.7328 & 0.7563 & 0.7519 & 0.6591 \\
Robustness & 1.0000 & 1.0000 & 1.0000 & 1.0000 & 1.0000 \\
Code health & 1.0000 & 1.0000 & 1.0000 & 1.0000 & 0.6000 \\
Static security & 1.0000 & 1.0000 & 1.0000 & 1.0000 & 1.0000 \\
Composite & 0.7212 & 0.6799 & 0.6737 & 0.6533 & 0.7071 \\
\end{longtable}

\end{minipage}\par\medskip

Figure 8 gives a common-scale view of all these components. The composite uses authored-reference VERTEX; the estimated-reference diagnostic in Appendix B.5 does not replace that component. The pools do not isolate governance: configuration, repair opportunities, and final-evaluation independence were not held fixed.

\begin{figure}
\centering
\includegraphics[width=1\linewidth,height=\textheight,keepaspectratio]{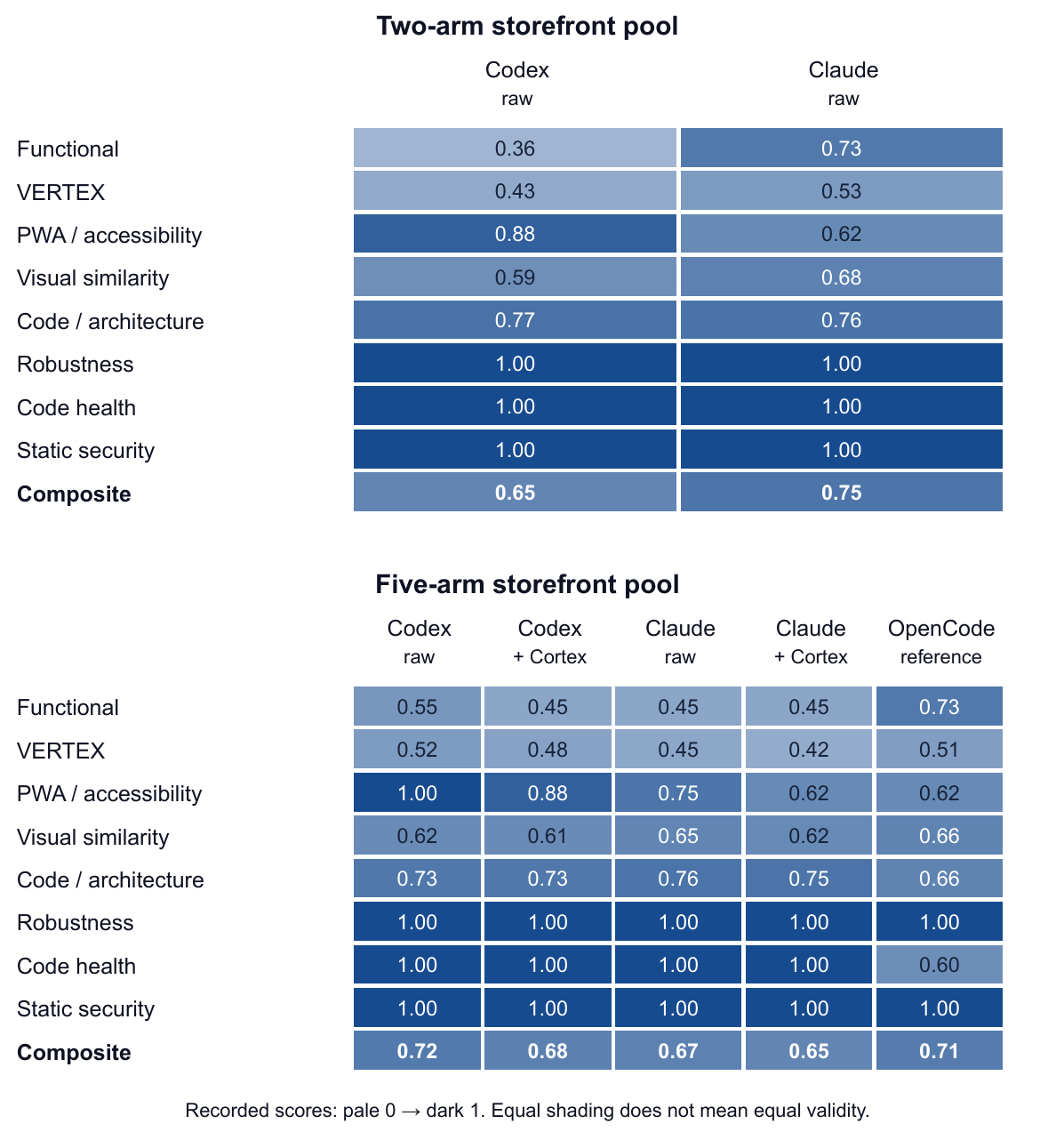}
\caption{Recorded storefront score profiles, rounded to two decimals for display. Tables 7--8 retain four-decimal values and distinguish the two separately captured candidate pools. Component scores are proxies, not success probabilities; perfect recorded robustness or static-security values do not establish exhaustive testing or absence of vulnerabilities. The composite is a weighted, build-gated combination, not an additional independent measurement.}\label{fig-8}
\end{figure}

\subsection{B.5 Authored and estimated reference ranks}\label{b.5-authored-and-estimated-reference-ranks}

The authored reference contains 13 capability descriptions and five architectural anchors. Estimated references use the public brief and the synthetic design vocabulary explained in Section 5.1.1. The pool diagnostic can also use descriptors supported by at least two other arms, excluding the candidate itself. This peer component is unavailable in the two-arm pool and can contribute in the five-arm pool, so reference construction differs.

Table 9 contains all score pairs underlying Figure 6. Both methods rank Claude above Codex in the two-arm pool; with two unequal observations, matching order alone yields correlation \(1\). The five-arm pool exchanges raw Codex and Cortex over Codex between ranks one and three, leaving the remaining ranks unchanged and yielding correlation \(0.6\).

\begin{minipage}{\linewidth}

\begin{longtable}[]{@{}
  >{\raggedright\arraybackslash}p{(\linewidth - 8\tabcolsep) * \real{0.1600}}
  >{\raggedright\arraybackslash}p{(\linewidth - 8\tabcolsep) * \real{0.2800}}
  >{\raggedright\arraybackslash}p{(\linewidth - 8\tabcolsep) * \real{0.2000}}
  >{\raggedright\arraybackslash}p{(\linewidth - 8\tabcolsep) * \real{0.2000}}
  >{\raggedright\arraybackslash}p{(\linewidth - 8\tabcolsep) * \real{0.1600}}@{}}
\caption{Exact VERTEX pairs and within-pool ranks (1 = highest) for Figure 6. Rank arrows mean authored to estimated rank, not improvement over time. The two pools share one storefront brief; seven pairs are not seven independent task samples. Peer-consensus descriptors are unavailable with only two arms.}\label{tbl-9}\tabularnewline
\toprule\noalign{}
\begin{minipage}[b]{\linewidth}\raggedright
Candidate pool
\end{minipage} & \begin{minipage}[b]{\linewidth}\raggedright
Configuration
\end{minipage} & \begin{minipage}[b]{\linewidth}\raggedright
Authored VERTEX
\end{minipage} & \begin{minipage}[b]{\linewidth}\raggedright
Estimated VERTEX
\end{minipage} & \begin{minipage}[b]{\linewidth}\raggedright
Rank change
\end{minipage} \\
\midrule\noalign{}
\endfirsthead
\toprule\noalign{}
\begin{minipage}[b]{\linewidth}\raggedright
Candidate pool
\end{minipage} & \begin{minipage}[b]{\linewidth}\raggedright
Configuration
\end{minipage} & \begin{minipage}[b]{\linewidth}\raggedright
Authored VERTEX
\end{minipage} & \begin{minipage}[b]{\linewidth}\raggedright
Estimated VERTEX
\end{minipage} & \begin{minipage}[b]{\linewidth}\raggedright
Rank change
\end{minipage} \\
\midrule\noalign{}
\endhead
\bottomrule\noalign{}
\endlastfoot
Two-arm & Codex raw & 0.4309 & 0.4142 & 2 → 2 \\
Two-arm & Claude raw & 0.5271 & 0.5465 & 1 → 1 \\
Five-arm & Codex raw & 0.5214 & 0.6026 & 1 → 3 \\
Five-arm & Cortex over Codex & 0.4776 & 0.6465 & 3 → 1 \\
Five-arm & Claude raw & 0.4512 & 0.5971 & 4 → 4 \\
Five-arm & Cortex over Claude & 0.4199 & 0.5126 & 5 → 5 \\
Five-arm & OpenCode & 0.5070 & 0.6159 & 2 → 2 \\
\end{longtable}

\end{minipage}\par\medskip

Both diagnostics concern one independently specified brief. No cross-brief holdout evaluation or correlation interval is reported. Similarity-rank agreement is not agreement on complete task correctness; peer-derived references can reproduce shared errors. These observations motivate further validation, not discarding authored references or claiming measured long-horizon performance.

\end{document}